\newtheorem*{prop*}{Proposition}
\newtheorem{prop}{Proposition}
\newtheorem{rmk}{Remark}
\newtheorem{theorem}{Theorem}
\newtheorem*{theorem*}{Theorem}
\newtheorem{lemma}{Lemma}
\newtheorem*{lemma*}{Lemma}
\newtheorem*{property*}{Property}
\newtheorem*{remark}{Remark}
\newtheorem{definition}{Definition}
\newtheorem*{assumption*}{Assumption}
\newcommand{\rnarr}{\textnormal{narr}}
\newcommand{\rwide}{\textnormal{wide}}
\title{Embedding Principle of Loss Landscape of Deep Neural Networks}
\author{
Yaoyu Zhang\textsuperscript{\rm 1,2}\thanks{Corresponding author: zhyy.sjtu@sjtu.edu.cn.}, Zhongwang Zhang\textsuperscript{\rm 1} \thanks{Part of this work is done when ZZ was an undergraduate student of Zhiyuan Honors Program at Shanghai Jiao Tong University.},
Tao Luo\textsuperscript{\rm 1}, Zhi-Qin John Xu\textsuperscript{\rm 1}\thanks{Corresponding author: xuzhiqin@sjtu.edu.cn.} \\
\textsuperscript{\rm 1}  School of Mathematical Sciences, Institute of Natural Sciences, MOE-LSC and \\
Qing Yuan Research Institute, Shanghai Jiao Tong University \\
\textsuperscript{\rm 2} Shanghai Center for Brain Science and Brain-Inspired Technology\\
\{zhyy.sjtu, 0123zzw666, luotao41, xuzhiqin\}@sjtu.edu.cn.
}
\begin{document}

\maketitle

\begin{abstract}
    Understanding the structure of loss landscape of deep neural networks (DNNs) is obviously important. In this work, we prove an embedding principle that the loss landscape of a DNN ``contains'' all the critical points of all the narrower DNNs. More precisely, we propose a critical embedding such that any critical point, e.g., local or global minima, of a narrower DNN can be embedded to a critical point/affine subspace of the target DNN with higher degeneracy and preserving the DNN output function. Note that, given any training data, differentiable loss function and differentiable activation function, this embedding structure of critical points holds.
    This general structure of DNNs is starkly different from other nonconvex problems such as protein-folding.
     Empirically, we find that a wide DNN is often attracted by highly-degenerate critical points that are embedded from narrow DNNs. The embedding principle provides a new perspective to study the general easy optimization of wide DNNs and unravels a potential implicit low-complexity regularization during the training.
    Overall, our work provides a skeleton for the study of loss landscape of DNNs and its implication, by which a more exact and comprehensive understanding can be anticipated in the near future. 
\end{abstract}

\section{Introduction}
Understanding the loss landscape of DNNs is essential for a theory of deep learning. An important problem is to quantify exactly how the loss landscape looks like \citep{weinan2020towards}.
This problem is difficult since the loss landscape is so complicated that it can almost be any pattern \citep{skorokhodov2019loss}. Moreover, its high dimensionality and the dependence on data, model and loss  make it very difficult to obtain a general understanding through empirical study. Therefore, though it has been extensively studied over the years, it remains an open problem to provide a clear picture about the organization of its critical points and their properties.

In this work, we make a step towards this goal through proposing a very general embedding operation of network parameters from narrow to wide DNNs, by which
we prove an embedding principle for fully-connected DNNs stated \emph{intuitively} as follows:

\emph{\textbf{Embedding principle}: the loss landscape of any network ``contains'' all critical points of all narrower networks.}


A ``narrower network'' means a DNN of the same depth but width of each layer no larger than the target DNN. The embedding principle slightly abuses the notion of ``contain'' since parameter space of DNNs of different widths are different. However, this inclusion relation is reasonable in the sense that, by our embedding operation, any critical point of any narrower network can be embedded to a critical point of the target network preserving its output function. Because of this criticality preserving property, we call this embedding operation the critical embedding.

We conclude our study by a ``principle'' since the embedding principle is a very general property of loss landscape of DNNs independent of the training data and choice of loss function, and is intrinsic to the layer-wise architecture of DNNs. 
In addition, the embedding principle is closely related to the training of DNNs. For example, as shown in Fig. \ref{fig:syntraining}(a), the training of a width-$500$ two-layer tanh NN experiences stagnation around the blue dot presumably very close to a saddle point, where the loss decreases extremely slowly. As shown in Fig. \ref{fig:syntraining}(b), we find that the DNN output at this blue point (red solid) is very close to the output of the global minimum (black dashed) of the width-$1$ NN, indicating that the underlying two critical points of two DNNs with different widths have the same output function conforming with the embedding principle. Importantly, this example shows that the training of a wide DNN can indeed experience those critical points from a narrow DNN unraveled by the embedding principle. Moreover, it demonstrates the potential of a transition from a local/global minimum of a narrow NN to a saddle point of a wide NN, which may be the reason underlying the easy optimization of wide NNs.

The embedding principle suggests an underlying mechanism to understand why heavily overparameterized DNNs often generalize well \citep{breiman1995reflections,zhang2016understanding} as follows. Roughly, the overparameterized DNN has a large capacity, which seems contradictory to the conventional learning theory, i.e., learning by a model of large capacity easily leads to overfitting. The embedding principle shows that the optima of a wide network intrinsically may be embedded from an optima of a much narrower network, thus, its effective capacity is much smaller. For example, as illustrated in Fig. \ref{fig:syntraining}, training of a heavily overparametrized width-$500$ NN (vs. $50$ training data) with small initialization first stagnated around a saddle presumably from width-$1$ NN and later converges to a global minimum presumably from width-$3$ NN, which clearly does not overfit.
This implicit regularization effect unraveled by the embedding principle is consistent with previous works, such as low-complexity bias \citep{arpit2017closer,kalimeris2019sgd,jin2020quantifying}, low-frequency bias \citep{xu_training_2018,xu2019frequency,rahaman2018spectral}, and condensation phenomenon of network weights \citep{luo2021phase,chizat_global_2018,ma2020quenching}.

\begin{figure}[h]
	\centering
	\subfigure[]{\includegraphics[width=0.24\textwidth]{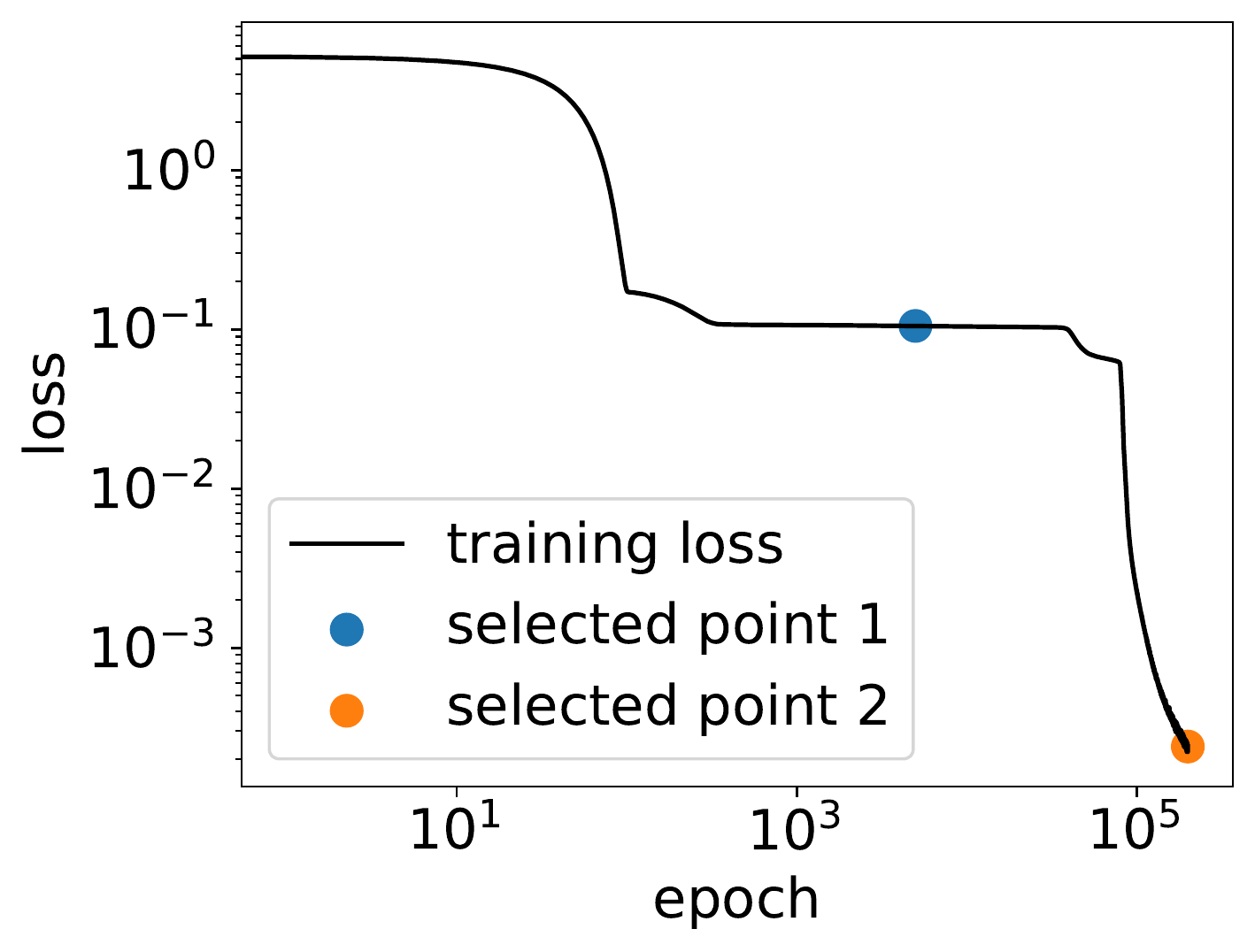}}
	\subfigure[]{\includegraphics[width=0.24\textwidth]{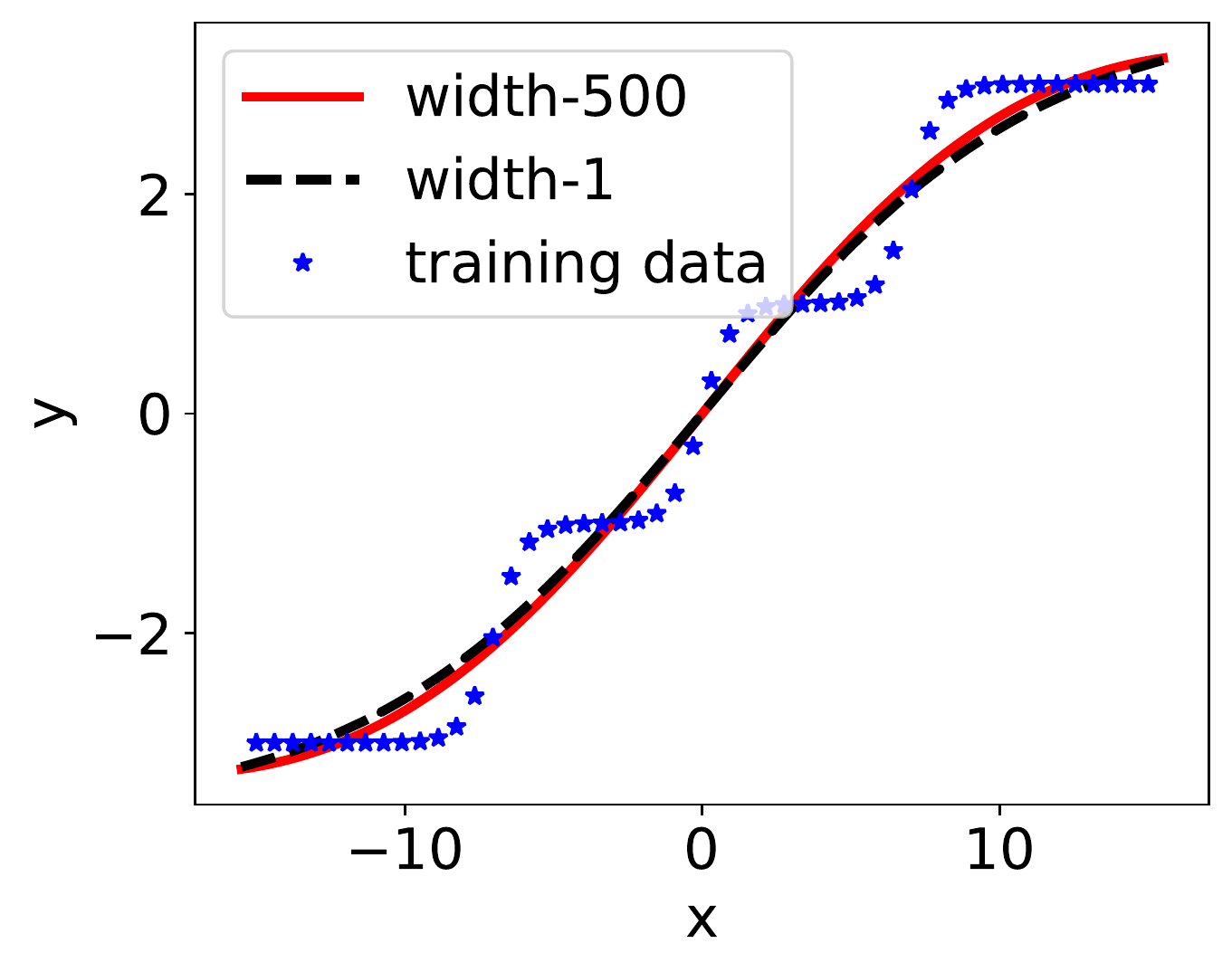}}
	\subfigure[]{\includegraphics[width=0.24\textwidth]{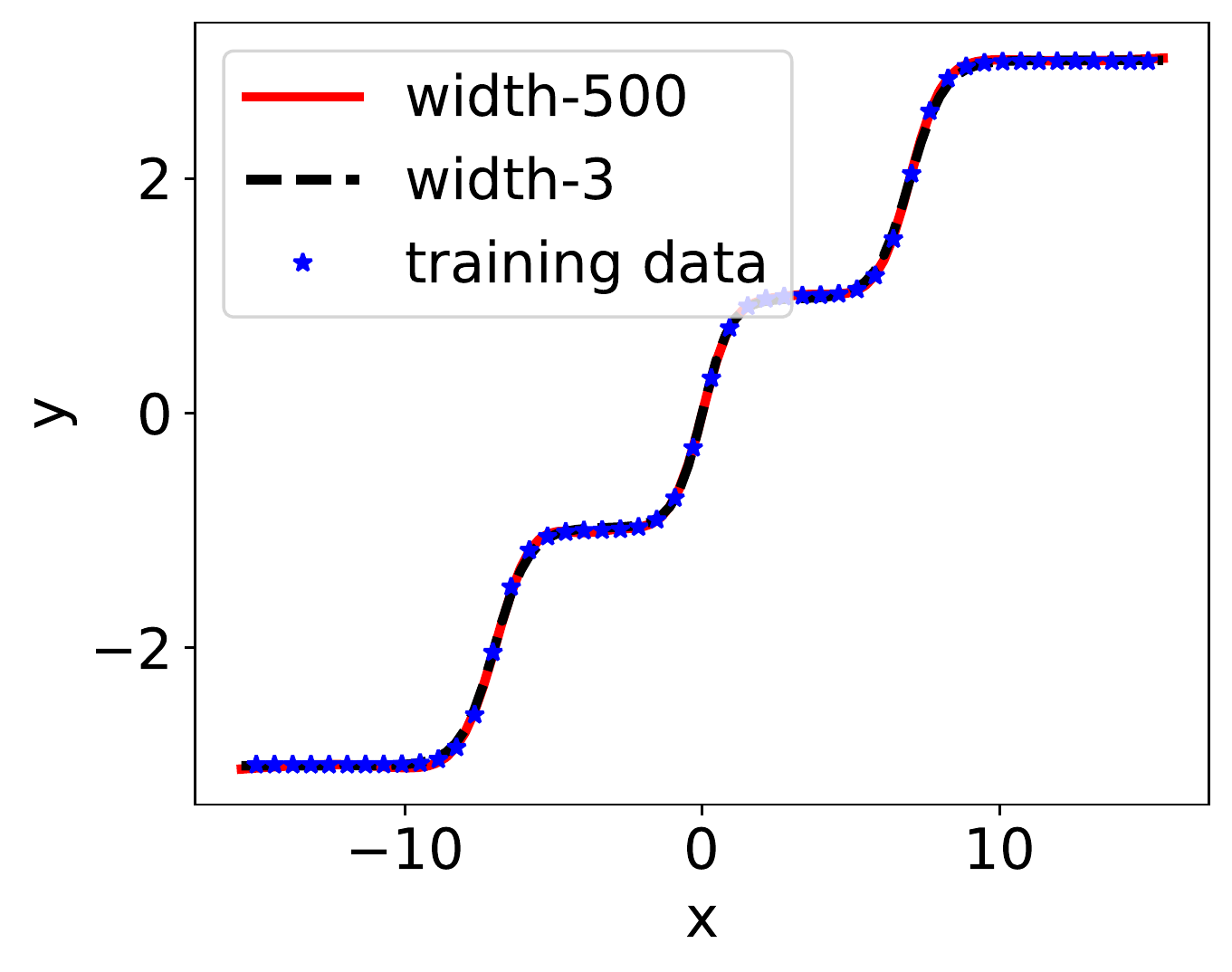}}
	\caption{(a) The training loss of two-layer tanh neural network with $500$ hidden neurons. (b) (c) Red solid: the DNN output at a training step indicated by (b) the blue dot or (c) the orange dot in (a); Black dashed: the output of the global minimum of (b) width-$1$ DNN or (c) width-$3$ DNN, respectively; Blue dots: training data. \label{fig:syntraining}}
\end{figure}

\section{Related works}
The loss landscape of DNNs is complex and related to the generalization. \cite{skorokhodov2019loss} numerically show that the loss landscape can almost be any pattern. \cite{keskar2017large} visualize minimizers in a 1d slice and suggest that a flat minimizer generalizes better.  \cite{wu2017towards} find that the volume of basin of attraction of good minima may dominate over that of poor minima in practical problems. \cite{he2019asymmetric} show that  at a local minimum there exist many asymmetric directions such that the loss increases abruptly along one side, and slowly along the opposite side.

Degeneracy is also an important property of minima. \cite{cooper2018loss}
shows that global minima is typically a high dimensional manifold for overparameterized DNNs.
\cite{sagun2016singularity} empirically shows that Hessian of the minimizer obtained by the training has many zero eigenvalues. Under strong assumptions, \cite{choromanska2015loss} shows  minima tend to be highly degenerate. This work demonstrates wide existence of highly degenerate critical points, including local or global minima and saddle points, in the loss landscape by the embedding principle.   

Lots of previous theoretical works focus on very wide DNNs, such as the phase diagram of two-layer ReLU infinite-width NNs \citep{luo2021phase}, NTK regime \citep{jacot_neural_2018,arora2019exact,zhang_type_2019,du2019gradient,zou2018stochastic,allen2019convergence,E2019comparative}, mean-field regime  ~\citep{mei_mean_2018,rotskoff_parameters_2018,chizat_global_2018,sirignano_mean_2020}. By the embedding principle, this work demonstrate the loss landscape similarity between a moderate-width NN and a very wide NN, that they share a set of critical points embedded from that of narrower NNs. Therefore, results about infinite-width NNs could provide valuable insights about training of finite-width NNs used in practice.



The complexity of NN output increases during the training \citep{arpit2017closer,xu_training_2018,xu2019frequency,rahaman2018spectral,kalimeris2019sgd,goldt2020modeling,he2020assessing,mingard2019neural,jin2020quantifying}.
For example, the frequency principle \citep{xu_training_2018,xu2019frequency}  states that DNNs often fit target
functions from low to high frequencies during the training. 

In \cite{zhang2021embedding}, we make a comprehensive extension of this conference paper. In the long paper, we provide a mathematical definition of the critical embedding and propose a new class of general compatible embeddings, which is a much wider class of critical embeddings than composition embeddings in this work. These general compatible embeddings provide much richer details about the geometry of critical submanifolds of DNN loss landscape. Note that the composition embedding technique is also studied in \cite{fukumizu2019semi} and \cite{csimcsek2021geometry}.

\section{Main results}
In this section, we intuitively summarize our key theoretical results about the embedding principle and empirically demonstrate its relevance to practice, starting from proposing an embedding operation as follows. Rigorous theoretical description and proofs are presented in the latter sections. 


\subsection{Characteristics of embedding principle}\label{sec:characteristics}

Consider a neural network $\vf_{\vtheta}(\vx)$, where $\vtheta$ is the set of all network parameters, $\vx\in\sR^{d}$  is the input. We summarize assumptions and provide definitions needed for all our results in this work below.
\begin{assumption*}

(i) $L$-layer ($L\geq 2$) fully-connected NN.

(ii) Training data $S=\{(\vx_i,\vy_i)\}_{i=1}^n$, $n\in\sZ^+\cup\{+\infty\}$. 

(iii)Loss function $\RS(\vtheta)=\Exp_S\ell(\vf_{\vtheta}(\vx),\vy)$.

(iv) Loss function and activation function are differentiable. Note that, even for functions like ReLU or hinge loss, as long as we uniquely assign a subgradient to their non-differentiable points, all our results still hold.
\end{assumption*}


\begin{definition}[\textbf{critical point}] 
    Parameter vector $\vtheta$ is a critical point of the landscape of $\RS$ if $\nabla_{\vtheta} \RS(\vtheta) =\vzero$.
\end{definition}
\begin{definition}[\textbf{critical submanifold/affine subspace}] 
    A critical submanifold or affine subspace $\fM$ is a connected subsubmanifold or affine subspace of the parameter space $\sR^M$, such that each $\vtheta\in\fM$ is a critical point of loss with the same loss value.
\end{definition}

\begin{definition}[\textbf{degree of degeneracy}] 
 The degree of degeneracy of point $\vtheta$ in the landscape of $\RS$ is the corank of Hessian matrix $\nabla_{\vtheta}\nabla_{\vtheta} \RS$, i.e., number of the zero eigenvalues. 
\end{definition}
\begin{remark}

In the above definition of degree of degeneracy, we require twice differentiable activation function and twice differentiable loss to compute Hessian for convenience. For loss and activation functions with only first-order differentiability, we extend the definition of degree of degeneracy as follows: for any critical point $\vtheta$ belonging to a $K$-dimensional critical submanifold $\fM$, its degree of degeneracy is at least $K$.
\end{remark}

We first introduce one-step embedding intuitively, and leave the rigorous definition latter. 
As shown in Fig.~\ref{fig:onestep}, an one-step embedding is performed by splitting any hidden neuron, say the black neuron in the left network, into two neurons colored in blue and purple in the right network. The input weights of the two splitted neurons are the same as the input weights of the original black neuron. Each output weight of the original black neuron is splitted into two parts of fraction $\alpha$ and $(1-\alpha)$ ($\alpha\in\sR$, a hyperparameter), respectively.  The multi-step embedding is the composition of multiple one-step embeddings. Since each one-step embedding can add one neuron to a selected layer, parameter of any NN can be embedded to the parameter space of any wider NN through a multi-step embedding. The multi-step embedding operation leads to the following property readily.

\begin{figure}
    \centering
    \includegraphics[width=0.8\textwidth]{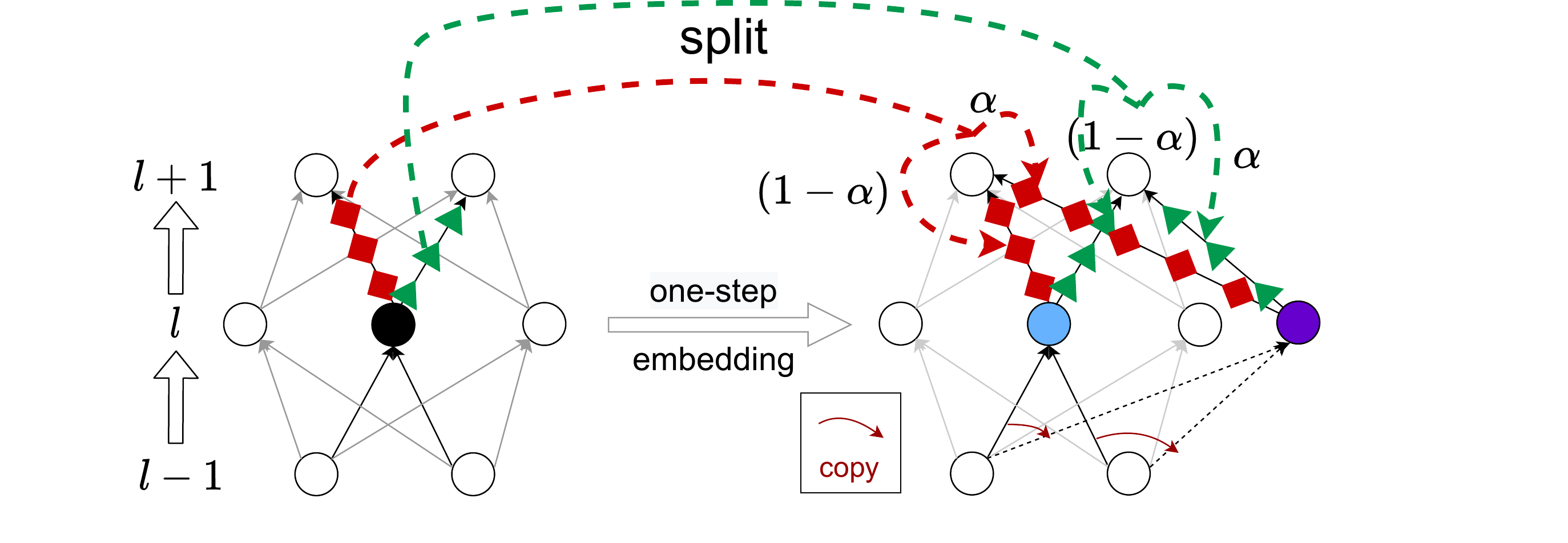} 
    \label{fig:onestep}
\caption{Illustration of one-step embedding. The black neuron in the left network is splitted into the blue and purple neurons in the right network. The red (green) output weight of the black neuron in the left net is splitted into two red (green) weights in the right net with ratio $\alpha$ and $(1-\alpha)$, respectively.    \label{fig:onestep}}
\end{figure}

\begin{prop*}[\textbf{one-step embedding preserves network properties, informal Prop. \ref{prop:one-step-embed}}] 
For any point $\vtheta_{\rnarr}$ of a DNN, a point $\vtheta_{\rwide}$ of a wider DNN obtained from $\vtheta_{\rnarr}$ by one-step embedding satisfies\\
(i) $\vf_{\vtheta_{\rnarr}}(\vx) = \vf_{\vtheta_{\rwide}}(\vx)$ for any $\vx$; \\
(ii) representation of the wide DNN at $\vtheta_{\rwide}$, i.e., the set of all different response functions of neurons, is the same as representation of the narrow DNN at $\vtheta_{\rnarr}$.
\end{prop*}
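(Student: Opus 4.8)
\emph{Proof proposal.} The plan is a direct verification by forward induction through the layers, tracking how the split of a single neuron propagates. Fix the one-step embedding that splits neuron $k$ in hidden layer $l$ (so $1\le l\le L-1$) into two neurons $k_1,k_2$ in the wide network. Concretely, denoting by $W^{[l']}_{ij}$, $b^{[l']}_i$, $z^{[l']}_i(\vx)$, $a^{[l']}_i(\vx)$ the weights, biases, pre-activations and post-activations (the latter two viewed as functions of the network input $\vx$), the wide parameter $\vtheta_{\rwide}$ satisfies $W^{[l]}_{k_1 j}=W^{[l]}_{k_2 j}=W^{[l]}_{kj}$ for all $j$ and $b^{[l]}_{k_1}=b^{[l]}_{k_2}=b^{[l]}_k$, every output weight $W^{[l+1]}_{mk}$ of the narrow network is replaced by the pair $W^{[l+1]}_{mk_1}=\alpha W^{[l+1]}_{mk}$, $W^{[l+1]}_{mk_2}=(1-\alpha)W^{[l+1]}_{mk}$, and all other weights and biases, together with the widths of all layers $\neq l$, are unchanged.

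First I would record the base case: layers $1,\dots,l-1$ of the two networks are literally identical, so every neuron in those layers has the same response function under $\vtheta_{\rnarr}$ and $\vtheta_{\rwide}$; in particular the layer-$(l-1)$ outputs agree. Since $k_1$ and $k_2$ inherit the input weights and bias of $k$ and receive that same layer-$(l-1)$ input, their pre-activations coincide with that of $k$, hence $a^{[l]}_{k_1}=a^{[l]}_{k_2}=a^{[l]}_k$ as functions, while the remaining layer-$l$ neurons are untouched and keep their response functions. The one computation that matters is the pass to layer $l+1$: for any neuron $m$ in layer $l+1$,
\begin{equation*}
z^{[l+1]}_m(\vx)=\sum_{j\neq k_1,k_2}W^{[l+1]}_{mj}a^{[l]}_j(\vx)+W^{[l+1]}_{mk_1}a^{[l]}_{k_1}(\vx)+W^{[l+1]}_{mk_2}a^{[l]}_{k_2}(\vx)+b^{[l+1]}_m,
\end{equation*}
and since $a^{[l]}_{k_1}=a^{[l]}_{k_2}=a^{[l]}_k$ and $W^{[l+1]}_{mk_1}+W^{[l+1]}_{mk_2}=(\alpha+(1-\alpha))W^{[l+1]}_{mk}=W^{[l+1]}_{mk}$, the two new terms collapse to $W^{[l+1]}_{mk}a^{[l]}_k(\vx)$, recovering exactly the layer-$(l+1)$ pre-activation of the narrow network. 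Hence all layer-$(l+1)$ neurons have identical response functions in the two networks.

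From here I would induct forward: layers $l+2,\dots,L$ have identical weights and biases in the two networks and, by the inductive hypothesis, identical inputs, so all their response functions agree; applying this at layer $L$ yields $\vf_{\vtheta_{\rnarr}}(\vx)=\vf_{\vtheta_{\rwide}}(\vx)$ for every $\vx$, which is (i). For (ii), the argument above shows that the response function of every neuron of the wide network equals that of some neuron of the narrow network and, conversely, every narrow-network response function is realized (neuron $k$'s by $k_1$ or $k_2$); the only change is that $a^{[l]}_k$ now occurs with multiplicity two, which does not alter the \emph{set} of distinct response functions, giving (ii).

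I do not expect a genuine obstacle: the entire content is the scalar identity $\alpha+(1-\alpha)=1$ applied to a common factor $a^{[l]}_k$, and the only thing requiring care is the index bookkeeping and a clean statement of the forward induction, plus the harmless observation that the boundary case $l=L-1$ (the split neuron feeding directly into the output layer) is covered by the very same computation and holds for any $\alpha\in\sR$. A fuller treatment would then obtain the multi-step statement by composing one-step embeddings and iterating this invariant.
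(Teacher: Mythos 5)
Your proof is correct and takes essentially the same route as the paper: there, Prop.~1 is derived from Lemma~1, whose part (i) is exactly your forward-pass computation showing that the duplicated feature together with the $\alpha$/$(1-\alpha)$ output-weight split leaves every pre-activation from layer $l+1$ onward unchanged. Part (ii) then follows, as you observe, because the embedding only changes the multiplicity of one response function, not the set of distinct ones.
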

The most important property of this embedding is criticality preserving as follows.
\begin{theorem*}[\textbf{criticality preserving, informal Theorem \ref{lem:crit-prese}}]
    For any critical point $\vtheta_{\rnarr}$ of a DNN, a point $\vtheta_{\rwide}$ of a wider DNN obtained from $\vtheta_{\rnarr}$ by multi-step embedding is a critical point.
\end{theorem*}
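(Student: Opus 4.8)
The plan is to reduce to a single one-step embedding and then carry out a direct backpropagation computation. A multi-step embedding is, by construction, a composition $\iota=\iota_K\circ\cdots\circ\iota_1$ of one-step embeddings, each $\iota_k$ adding one neuron to one layer. By the preceding one-step embedding proposition (part (i)) every $\iota_k$ preserves the DNN output function, hence preserves the loss value, so each intermediate parameter is a genuine point of a strictly widening sequence of architectures. It therefore suffices to prove the one-step statement: if $\vtheta_{\rnarr}$ is a critical point of $\RS$ for some architecture, then the point $\vtheta_{\rwide}$ obtained by one one-step embedding is a critical point of $\RS$ for the one-neuron-wider architecture; the theorem then follows by induction on $K$.

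For the one-step case, fix the embedding that splits neuron $j$ of layer $l$ into copies $1$ and $2$ with shared input weights and bias and with output weights scaled by $\alpha$ and $1-\alpha$. I would write, for any parameter $\theta$, the backpropagation form of the gradient,
\[
\partial_\theta \RS(\vtheta) = \Exp_S \langle \nabla_{\vf}\ell(\vf_{\vtheta}(\vx),\vy),\, \partial_\theta \vf_{\vtheta}(\vx)\rangle,
\]
and track the forward activations and the backpropagated errors in the two networks. Forward pass: layers $1,\dots,l-1$ are untouched, the two copies share the input weights of the original, so each has the same response function as neuron $j$; and their contribution to layer $l+1$ is $\alpha(\cdot)+(1-\alpha)(\cdot)$, which reconstitutes the original contribution, so layers $l+1,\dots,L$ are unchanged too — this is exactly parts (i)–(ii) of the one-step proposition. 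Backward pass: the error backpropagated from the output down to layer $l+1$ is identical in both networks; at layer $l$ the error on copy $1$ equals $\alpha$ times the original error on neuron $j$ and on copy $2$ equals $(1-\alpha)$ times it (the copies have equal pre-activations and output weights scaled by $\alpha$, $1-\alpha$); and propagating one more layer back, the two copy-contributions recombine with weight $\alpha+(1-\alpha)=1$, so the errors at layers $\le l-1$ again coincide with those of the narrow net.

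Plugging these into the backprop formula shows that every gradient coordinate of $\RS$ at $\vtheta_{\rwide}$ equals a fixed scalar times the corresponding gradient coordinate of $\RS$ at $\vtheta_{\rnarr}$: the scalar is $\alpha$ for the input weights and bias of copy $1$, $1-\alpha$ for those of copy $2$, and $1$ for every other parameter (including the output weights of both copies, which each inherit the gradient of the original output weight). Since $\nabla_{\vtheta_{\rnarr}}\RS(\vtheta_{\rnarr})=\vzero$, all these coordinates vanish, hence $\nabla_{\vtheta_{\rwide}}\RS(\vtheta_{\rwide})=\vzero$. The main obstacle — and the only part that is not routine chain rule — is the backward-pass claim at layers $\le l-1$: one must check that the two error contributions of the copies recombine exactly into the original error, uniformly over the choice of split layer $l$ (including the edge cases $l=1$ and $l=L-1$) and including biases, and keep careful track of which coordinates of $\vtheta_{\rwide}$ are "new" versus "inherited". (Note that merely pulling $\RS$ back along the affine embedding map $\iota$ gives only $J_\iota^{\top}\nabla\RS(\vtheta_{\rwide})=\nabla\RS(\vtheta_{\rnarr})=\vzero$, i.e. $\nabla\RS(\vtheta_{\rwide})$ orthogonal to the range of the Jacobian $J_\iota$ — not that it vanishes — so this structural forward/backward computation is genuinely needed.)
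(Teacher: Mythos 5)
Your proposal is correct and follows essentially the same route as the paper: reduce the multi-step embedding to one-step embeddings by induction, track the forward features and backpropagated errors (the paper's Lemma~\ref{lem:repinv}, which shows the errors at the split layer become $(1-\alpha)$ and $\alpha$ times the original and recombine exactly at all earlier layers), and then read off that every gradient coordinate of the wide network is $\alpha$, $1-\alpha$, or $1$ times a vanishing gradient coordinate of the narrow network. Your closing remark — that pulling back $\RS$ along the embedding only yields orthogonality to the Jacobian's range, so the explicit forward/backward computation is genuinely needed — is exactly the right observation.
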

The embedding operation explains the cause of a type of degeneracy in the loss landscape.
\begin{theorem*}[\textbf{degeneracy of embedded critical points, informal Theorem \ref{lem:multi-criti}}]
    If output weights of neurons in each layer of a DNN at a critical point $\vtheta_{\rnarr}$ are not all zero, then, for any $K$-neuron wider DNN, $\vtheta_{\rnarr}$ can be embedded to a $K$-dimensional critical affine subspace.
\end{theorem*}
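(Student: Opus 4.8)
The plan is to exhibit the claimed $K$-dimensional critical affine subspace explicitly via a multi-step embedding that, layer by layer, repeatedly splits one neuron carrying a nonzero output weight, and then lets the distribution of that output weight among the resulting copies vary freely. Concretely, present the $K$-neuron-wider DNN as obtained from the DNN at $\vtheta_{\rnarr}$ by inserting $k_{\ell}\geq 0$ extra neurons into hidden layer $\ell$, with $\sum_{\ell}k_{\ell}=K$. For each $\ell$ with $k_{\ell}\geq 1$, the hypothesis furnishes a neuron $i_{\ell}$ in layer $\ell$ whose output weight $\mathbf{w}_{\ell}$ (the vector of weights from $i_{\ell}$ into layer $\ell+1$) is nonzero. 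Carry out the multi-step embedding that creates $k_{\ell}$ extra copies of $i_{\ell}$: afterwards layer $\ell$ holds the original neurons together with neurons $i_{\ell}^{(0)},\dots,i_{\ell}^{(k_{\ell})}$ (with $i_{\ell}^{(0)}$ the survivor of $i_{\ell}$), all inheriting the input weight of $i_{\ell}$ and hence sharing a common response function, with output weights $c_{0}^{[\ell]}\mathbf{w}_{\ell},\dots,c_{k_{\ell}}^{[\ell]}\mathbf{w}_{\ell}$ for scalars summing to $1$. Let $\fM$ be the set of parameter vectors of the wide DNN obtained by choosing, independently for every such $\ell$, an arbitrary $(c_{1}^{[\ell]},\dots,c_{k_{\ell}}^{[\ell]})\in\sR^{k_{\ell}}$, setting $c_{0}^{[\ell]}=1-\sum_{j\geq 1}c_{j}^{[\ell]}$, and leaving all other coordinates at their embedded values; its base point, with all shares equal to $1/(k_{\ell}+1)$, is a genuine multi-step embedding image of $\vtheta_{\rnarr}$.

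It then remains to verify three things. (i) $\fM$ is an affine subspace of dimension exactly $K$: it is the image of an affine map of the $K$ free scalars, and that map is injective because varying $c_{j}^{[\ell]}$ displaces the parameter vector along the direction carrying $+\mathbf{w}_{\ell}$ in the output-weight block of $i_{\ell}^{(j)}$ and $-\mathbf{w}_{\ell}$ in that of $i_{\ell}^{(0)}$; the $k_{\ell}$ such directions within layer $\ell$ are linearly independent precisely because $\mathbf{w}_{\ell}\neq\vzero$ (the only point where the hypothesis is used), and directions from different layers occupy disjoint coordinate blocks. (ii) $\RS$ is constant on $\fM$: at every point of $\fM$ the total contribution of $i_{\ell}^{(0)},\dots,i_{\ell}^{(k_{\ell})}$ to layer $\ell+1$ equals $\big(\sum_{j}c_{j}^{[\ell]}\big)\mathbf{w}_{\ell}=\mathbf{w}_{\ell}$ times their common response function, exactly as in the DNN at $\vtheta_{\rnarr}$, so the output-preserving part of the one-step-embedding proposition gives $\vf_{\vtheta}\equiv\vf_{\vtheta_{\rnarr}}$ and hence $\RS(\vtheta)=\RS(\vtheta_{\rnarr})$ throughout $\fM$. (iii) Every point of $\fM$ is a critical point: on the open dense subset of $\fM$ where no partial sum $c_{1}^{[\ell]}+\dots+c_{m}^{[\ell]}$ with $1\leq m\leq k_{\ell}-1$ equals $1$, one solves recursively for one-step split ratios $\alpha\in\sR$ that realize the prescribed shares, so those points are honest multi-step embeddings of $\vtheta_{\rnarr}$ and are critical by the criticality-preserving theorem; since $\nabla_{\vtheta}\RS$ is continuous and $\fM$ is connected, it vanishes on all of $\fM$. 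Combining (i)--(iii) with the base point shows $\vtheta_{\rnarr}$ is embedded to a $K$-dimensional critical affine subspace.

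The main obstacle is the tension between iterated splitting and affineness: parametrizing a chain of one-step splits by their ratios yields a multilinear rather than affine family, so one cannot directly claim the reachable locus is an affine subspace. Reparametrizing by the output-weight shares $c_{j}^{[\ell]}$ restores affineness, but then the share-to-ratio map is only defined on a dense open set, which is why step (iii) argues criticality on that dense set and then extends by continuity of the gradient instead of through a single clean embedding. A secondary point needing care is the bookkeeping when $k_{\ell}>1$ in a layer that has only one nonzero-output-weight neuron: the recursive construction must at each stage split a copy currently carrying a nonzero share, which holds at the base point and along the ratios used but should be checked rather than taken for granted.
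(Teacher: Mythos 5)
Your proof is correct, and it resolves the central difficulty by a genuinely different device than the paper. Both arguments produce the same affine subspace (your displacement directions are exactly the paper's $\vv_i$: $+\mathbf{w}_{\ell}$ in the output block of copy $j$, $-\mathbf{w}_{\ell}$ in that of the survivor), and both hinge on the observation you flag explicitly: composing one-step splits and parametrizing by the ratios gives a multilinear, not affine, family. The paper fixes this algebraically: it introduces generalized splitting operators $\fV_{l,s,J}$ that subtract from a whole group $J$ of already-created copies, proves each $\fT_{l,s,J}^{\alpha}$ is still a one-step critical embedding, and shows all products of two or more $\fV$'s vanish, so the composed embedding $\prod_i\fT_{l_i,s_i,J_i}^{\alpha_i}$ is \emph{exactly} affine in $(\alpha_1,\dots,\alpha_K)$ and every point of the subspace is literally an embedding image. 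You instead reparametrize by the final output-weight shares $c_j^{[\ell]}$ (affine by construction), show generic share vectors are reachable by an honest chain of one-step splits via $\alpha_m=c_m/(1-c_1-\cdots-c_{m-1})$, and extend criticality to the excluded hyperplanes by continuity of $\nabla_{\vtheta}\RS$. Your route is more elementary (no new operators, no cancellation identities) and makes the $K$-dimensionality and the use of $\mathbf{w}_{\ell}\neq\vzero$ more transparent than the paper does; its one cost is the appeal to continuity of the gradient, which needs $C^1$ rather than bare differentiability of $\sigma$ and $\ell$. That caveat is easily removed: by Lemma \ref{lem:repinv}-type bookkeeping, at every point of $\fM$ the error vector of copy $j$ is $c_j^{[\ell]}$ times that of the original neuron while its features and feature gradients are unchanged, so each gradient component is a fixed multiple of a vanishing gradient component at $\vtheta_{\rnarr}$ and the gradient vanishes on all of $\fM$ directly, with no limiting argument. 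Your closing remark about always splitting a copy carrying a nonzero share is the right thing to check, though note it is only needed for \emph{realizability} of the shares, not for criticality preservation, which holds for splitting any neuron.
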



\begin{remark}
By above theorem, each step of embedding of a critical point in general is accompanied by an increased degree of degeneracy. Therefore, degenerate critical points in general widely exist in the loss landscape of a DNN, and non-degenerate critical points are rare because they often become degenerate once embedded to a wider DNN.
\end{remark}

In previous studies, degeneracy is often considered as a consequence of over-parameterization depending on the size of training data $n$. Specifically, \cite{cooper2018loss} proves that the degree of degeneracy of global minima is $m-n$ for $1$-d output, where $m$ is the number of network parameters. However, we demonstrate by the above theorem that regardless of whether the NN is over-parameterized, degenerate critical points are prevalent in its loss landscape as long as narrower DNNs possess critical points.




\subsection{Numerical experiments}\label{sec:exp}
\paragraph{Experimental setup.}
Throughout this work, we use two-layer fully-connected neural network with size $d$-$m$-$d_{out}$. The input dimension $d$ is determined by the training data. The output dimension $d_{out}$ is different for different experiments. The number of hidden neurons $m$ is specified in each experiment. All parameters are initialized by a Gaussian distribution with mean zero and variance specified in each experiment. We use MSE loss trained by full batch gradient descent for 1D fitting problems (Figs. \ref{fig:syntraining}, \ref{fig:expand}(a) and \ref{fig:embedloss}), and default Adam optimizer with full batch for others. The learning rate is fixed throughout the training. More details of experiments are shown in Appendix B. 

\paragraph{Increment of degeneracy through embedding.} We train a two-layer NN of width $m_{\rm small}=2$ to learn data of Fig. \ref{fig:syntraining} shown in Fig. \ref{fig:expand}(a) or Iris dataset \citep{fisher1936use} in Fig. \ref{fig:expand}(b) to a critical point. 
We first roughly estimate the possible interval of critical points by observing where the loss decays very slowly, and then take the point with the smallest derivative of the parameters (use $L_1$ norm) as an empirical critical point. The $L_1$ norm of the derivative of loss function at the empirical critical point is approximately $7.15\times 10^{-15}$ for Fig. \ref{fig:expand}(a) and $3.72\times 10^{-13}$ for Fig. \ref{fig:expand}(b), which are reasonably small.
We then embed this critical point to networks of width $m=3$ and $m=4$ through an one-step or a two-step embedding, respectively. It is obvious from Fig. \ref{fig:expand} that each step of embedding incurs one more zero eigenvalue in the Hessian matrix,
which conforms with Theorem \ref{lem:multi-criti}. Moreover, in Fig. \ref{fig:expand}(a), for $m=2$, all eigenvalues are positive (red) indicating the critical point obtained by training is a local or global minimum. After embedding, this point becomes a saddle due to the emergence of negative eigenvalues (blue). Specifically, in both Fig. \ref{fig:expand}(a) and (b), we observe a steady increase of significant negative eigenvalues, e.g., from $0$ to $1$ to $2$ in (a) and from $3$ to $5$ to $7$ in (b), which implies reduced difficulty in escaping from the corresponding critical point in a wider NN during the training.

\begin{figure}[h]
	\centering
	\subfigure[synthetic data]{\includegraphics[width=0.34\textwidth]{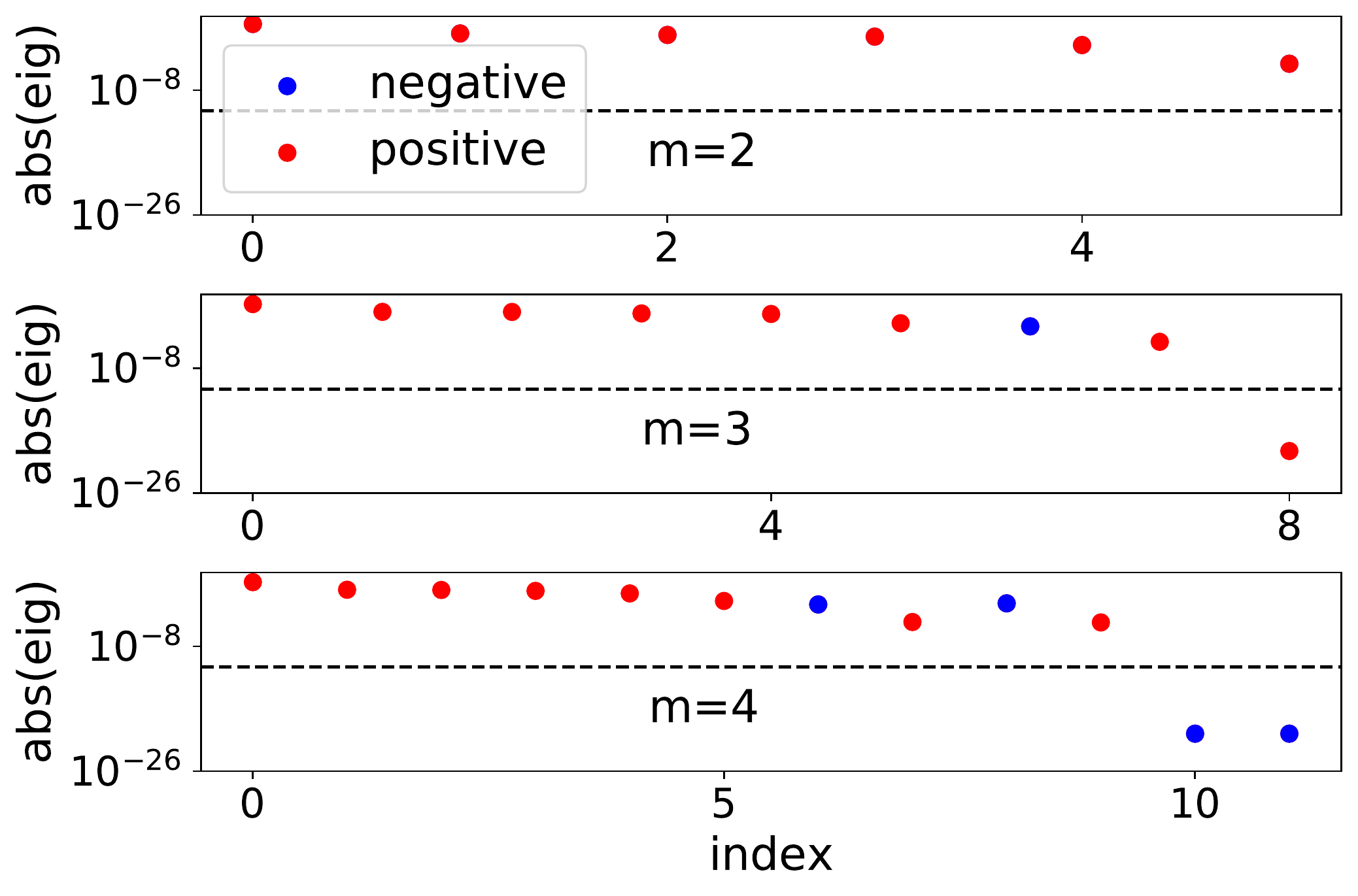}}
	\subfigure[Iris data]{\includegraphics[width=0.34\textwidth]{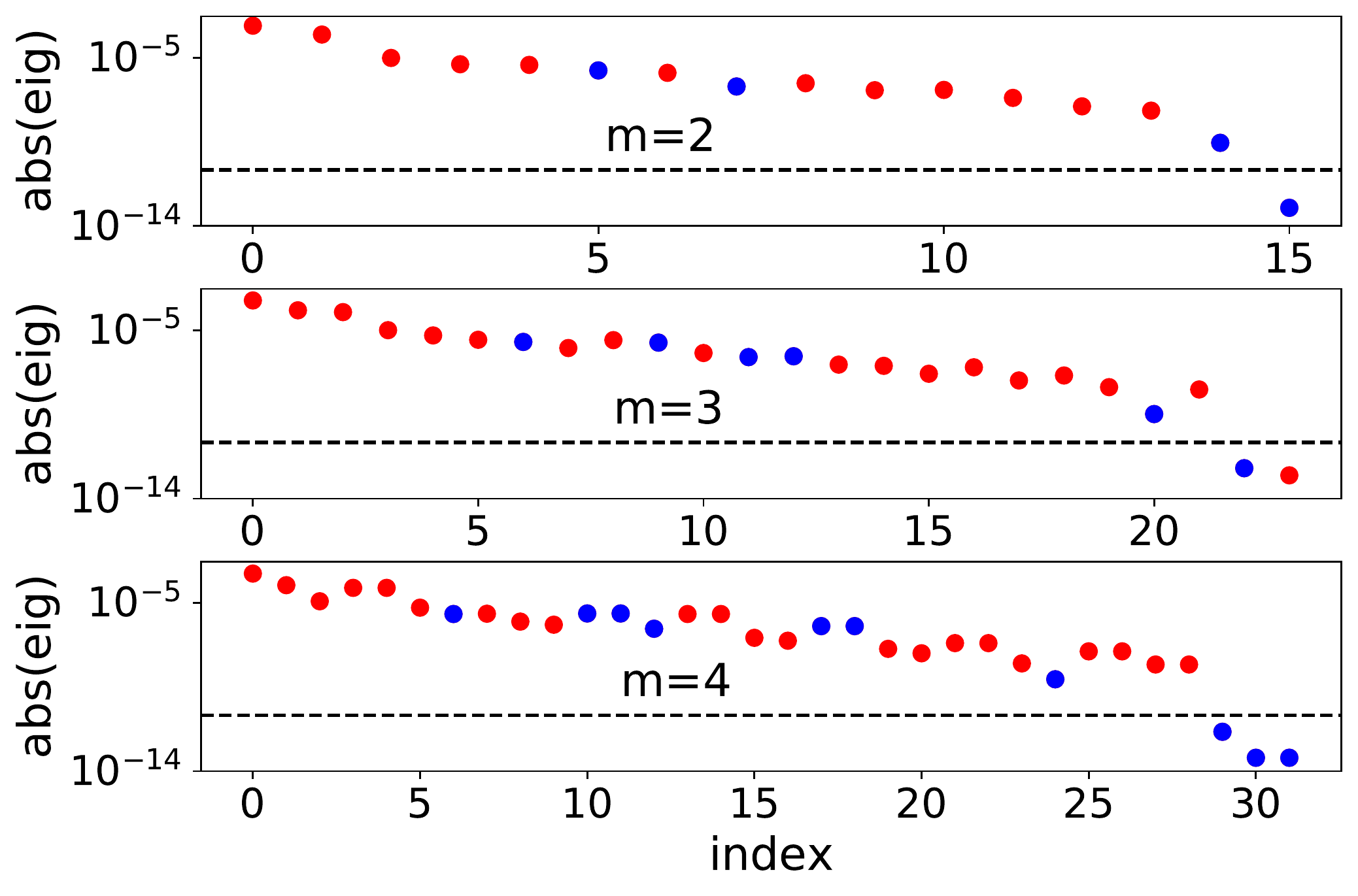}} 
    \caption{Eigenvalues of Hessian of NNs at the critical points embedded from the NN with width $m_{\rm small}=2$  for learning data of Fig. \ref{fig:syntraining} in (a) and for Iris dataset in (b). The value of $m$ in each sub-figure is the NN width after embedding. The auxiliary dash line in each sub-figure is $y=10^{-11}$.
    We equally split one neuron of a width-$2$ two-layer NN at a critical point into k neurons ($k=2,3$), whose input weights remain the same but output weights are $1/k$ of the original neuron.	\label{fig:expand}}
\end{figure} 

\paragraph{Empirical diagram of loss landscape.}  In Fig. \ref{fig:embedloss}, we present an empirical diagram of loss landscape of a width-$3$ two-layer tanh DNN to visualize a set of its critical points predicted by the embedding principle, i.e., critical points embedded from network of width-$1$ or -$2$ respectively as well as critical points that cannot be obtained through embedding. Through the training of width-$1$, -$2$, -$3$ network respectively on the training data presented in Fig. \ref{fig:syntraining} for multiple trials, we discover $1$ critical point for width-$1$ network, $2$ critical points for width-$2$ network and $1$ critical point for width-$3$ network that cannot be embedded from a narrower NN. Then, embedding all these four critical points to critical points/affine subspaces of loss landscape of the width-$3$ network, we obtain four sets of critical points with their loss values, output functions, degrees of degeneracy and width of network they embedded from illustrated in Fig. \ref{fig:embedloss}.
This diagram immediately tells us what attracts the gradient-based training trajectory for a width-$3$ network. Specifically, if stagnation happens during the training, this diagram informs us the potential loss values and output functions at stagnation, which could help us better understand the nonlinear training process of not only a width-$3$ network but also much wider networks due to the embedding principle.  Furthermore, as illustrated in Fig. \ref{fig:syntraining} for the training process of a $500$-neuron NN, 
saddle points of a wide NN, effectively local or global minima of narrow NNs, composes a trajectory, which may serve as a compass for achieving a global minimum from narrow NNs of low complexity.


\begin{figure}[h]
	\centering
	\includegraphics[width=0.6\textwidth]{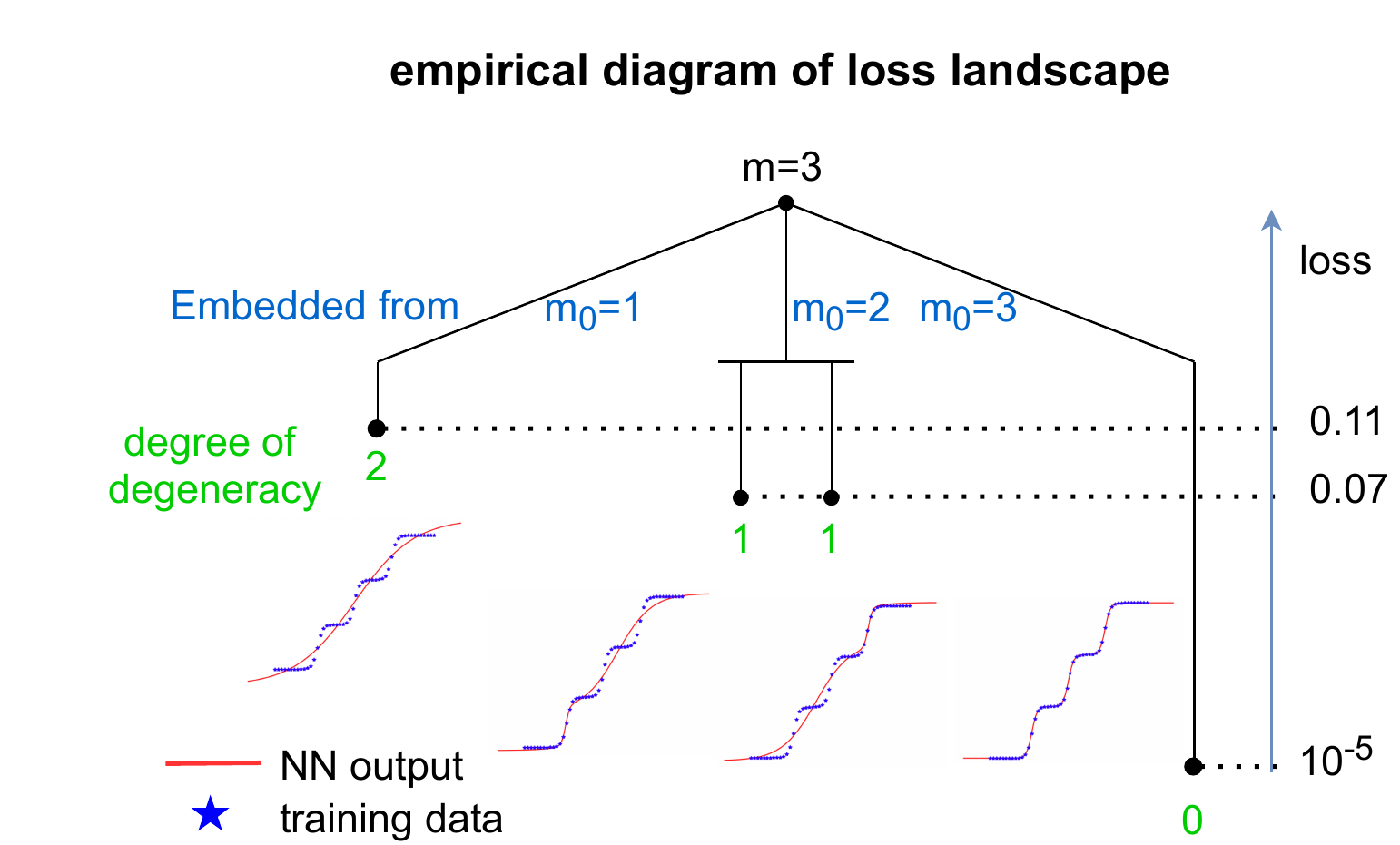}
	\caption{Empirical diagram of loss landscape of a width-$3$ two-layer tanh NN, i.e., all critical points width-3 or narrower NNs may get close to during the training under proper initialization. Each black dot at terminal represents a specific set of critical points of loss embedded from critical points of NNs of different widths (blue). These critical points have different loss values (ordinate), degrees of degeneracy (green) and output functions (red solid curves) as labelled in the figure. The blue dots represent the training data. We use the same equal splitting as Fig. \ref{fig:expand} to embed critical points of width-1 or width-2 NN to critical points of the width-3 NN and compute the hessian to obtain the corresponding degree of degeneracy. Note that the degree of degeneracy of these critical points computed numerically in this problem coincides with their minimal degree of degeneracy $m-m_0$ in Theorem \ref{lem:multi-criti}.  \label{fig:embedloss}}
\end{figure} 

\paragraph{Reduction of DNN at critical point.} The embedding principle predicts a class of critical points of a NN embedded from much narrower NNs. At such a critical point, we shall be able to find neuron groups, within which neurons have similar orientation of input weights presumably originated from the same neuron of a narrow NN through embedding. This prediction is confirmed by the following experiment in Fig. \ref{fig:expand_mnist}. We train a width-$400$ two-layer ReLU NN $\vf_{\vtheta}=\sum_{k=1}^m a_k\sigma(\vw_k^T\tilde{\vx})$\ ($\tilde{\vx} = [\vx^\T,1]^\T$) on $1000$ training samples of the MNIST dataset with small initialization. At the blue dot in Fig. \ref{fig:expand_mnist}(a), the loss decreases very slowly, presumably very close to a saddle point. We then examine the orientation similarity between each pair of neuron input weights by computing the inner product of two normalized input weight. 
As shown in Fig. \ref{fig:expand_mnist}(b), there emerge $58$ groups of neurons (neurons with very small amplitudes are neglected and later directly removed), where similarity between input weights in the same group is at least $0.9$. For each group $S_{\rm similar}$, we randomly select a neuron $j$, replace its output weight by $\sum_{k\in S_{\rm similar}} a_k \|\vw_k\|_2 /\|\vw_j\|_2$, and discard all other neurons in the group. The parameter set before reduction is denoted by $\vtheta_{\rm ori}$, and after reduction by $\vtheta_{\rm redu}$. Width of the NN is reduced from $400$ to $58$. We train the reduced NN from $\vtheta_{\rm redu}$ as shown in Fig. \ref{fig:expand_mnist}, which stagnated after a few steps at the same loss value as the blue point in Fig. \ref{fig:expand_mnist}(a) marked by the blue dash and represented by the blue point in Fig. \ref{fig:expand_mnist}(c). We then compare the prediction between original model and the reduced model at the corresponding blue points on $10000$ test data as shown in Fig.\ref{fig:expand_mnist}(d). For each grid, color indicates the frequency of that prediction pair. 
Specifically, the highlight of diagonal element indicts high prediction agreement of two models (overall $\sim98.5\%$). Therefore, this critical point of the reduced width-$58$ NN well matches the critical point of the original width-$400$ NN, clearly demonstrating the relevance of our embedding principle to real dataset training.


\begin{figure}[h]
	\centering
	\subfigure[initial loss]{\includegraphics[width=0.23\textwidth]{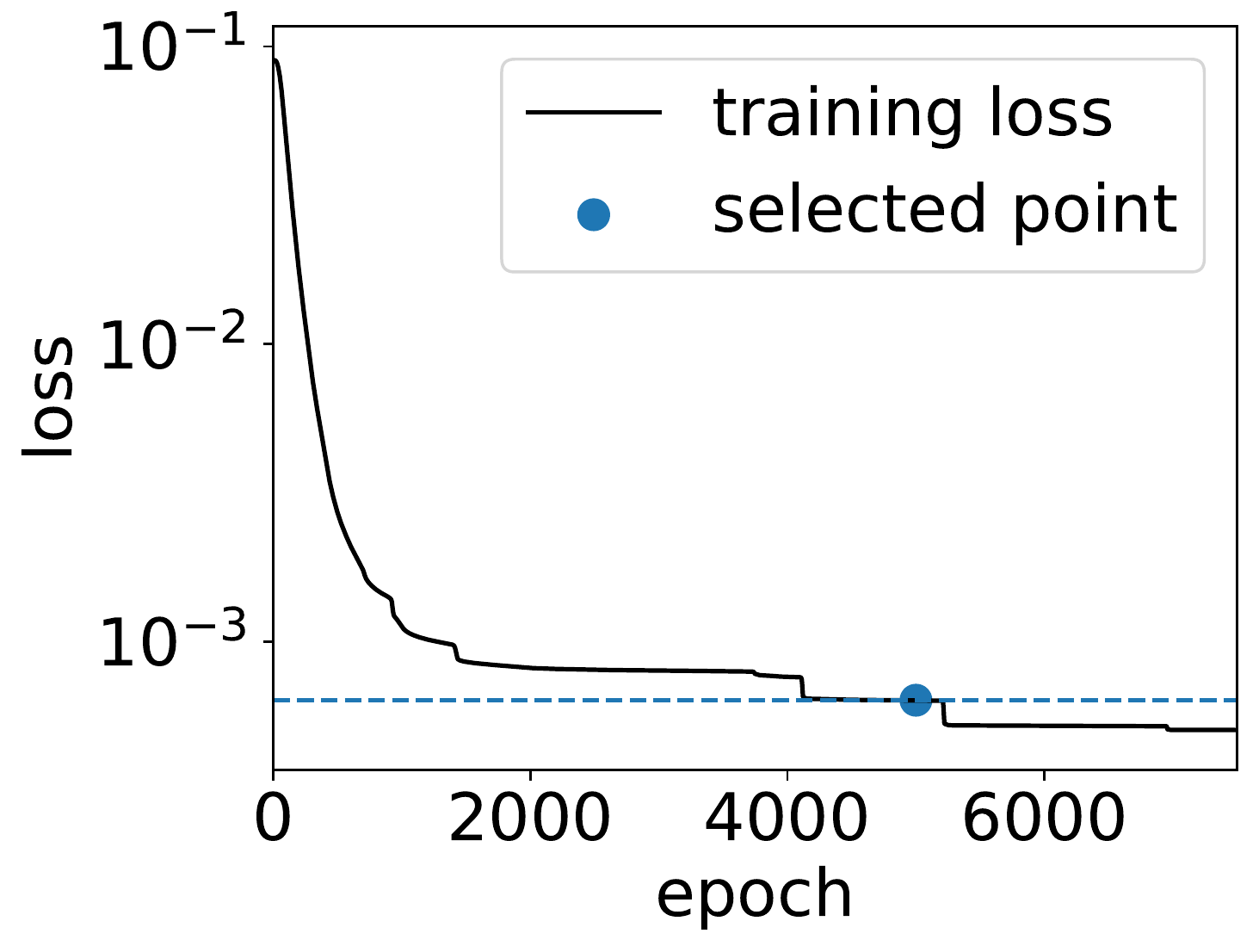}}
	\subfigure[orientation similarity ]{\includegraphics[width=0.23\textwidth]{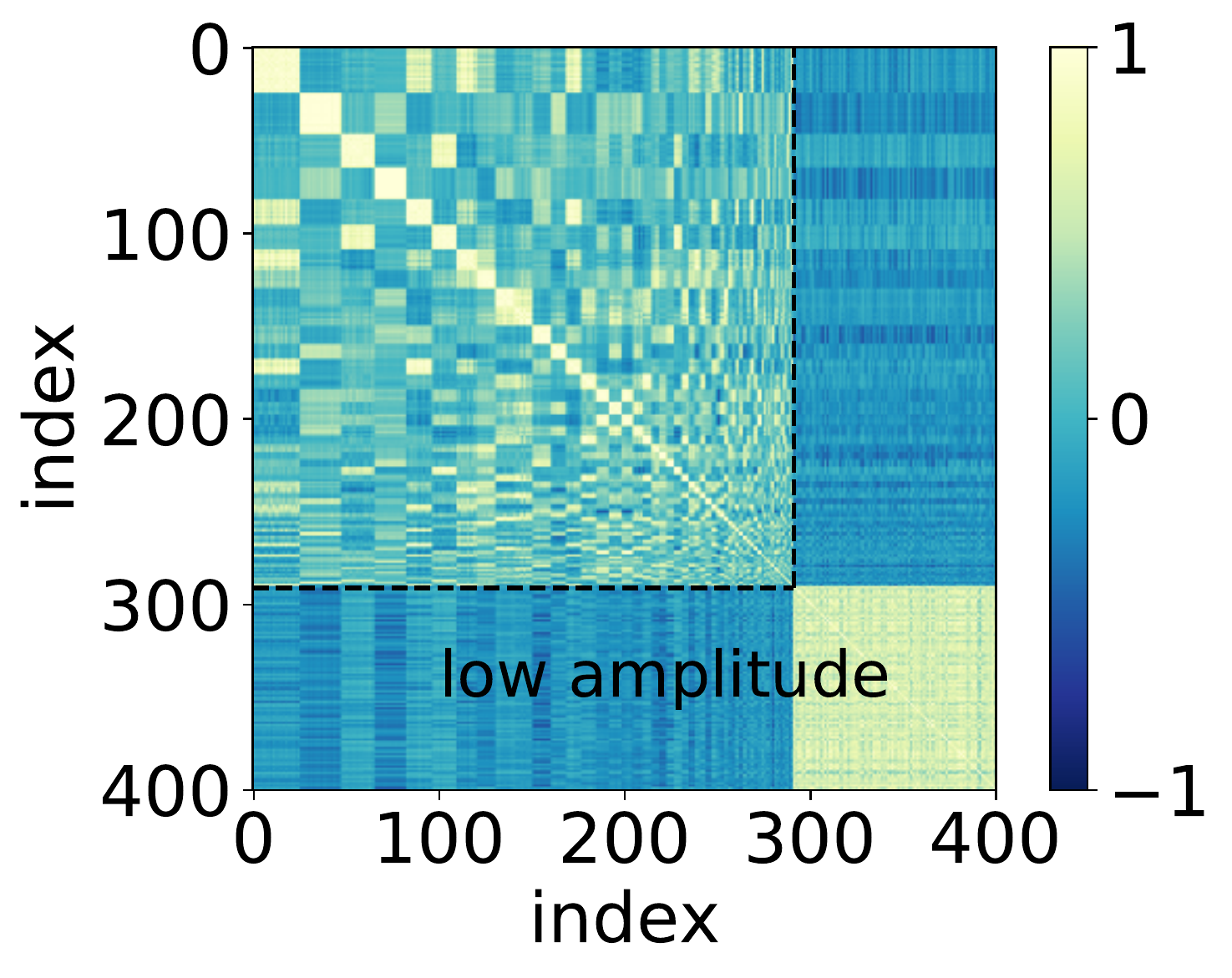}}
	\subfigure[loss after reduction]{\includegraphics[width=0.23\textwidth]{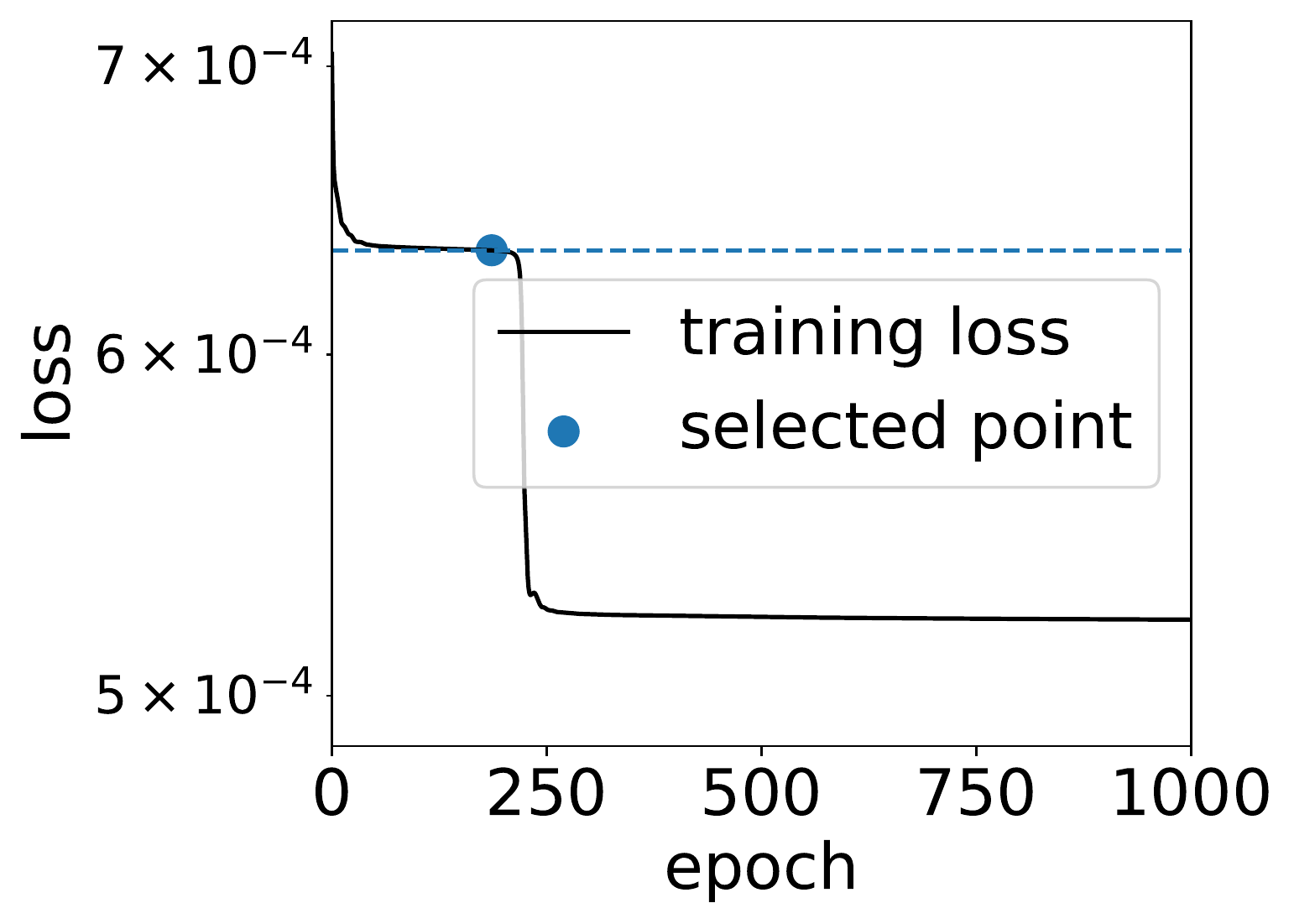}}
	\subfigure[prediction similarity]{\includegraphics[width=0.23\textwidth]{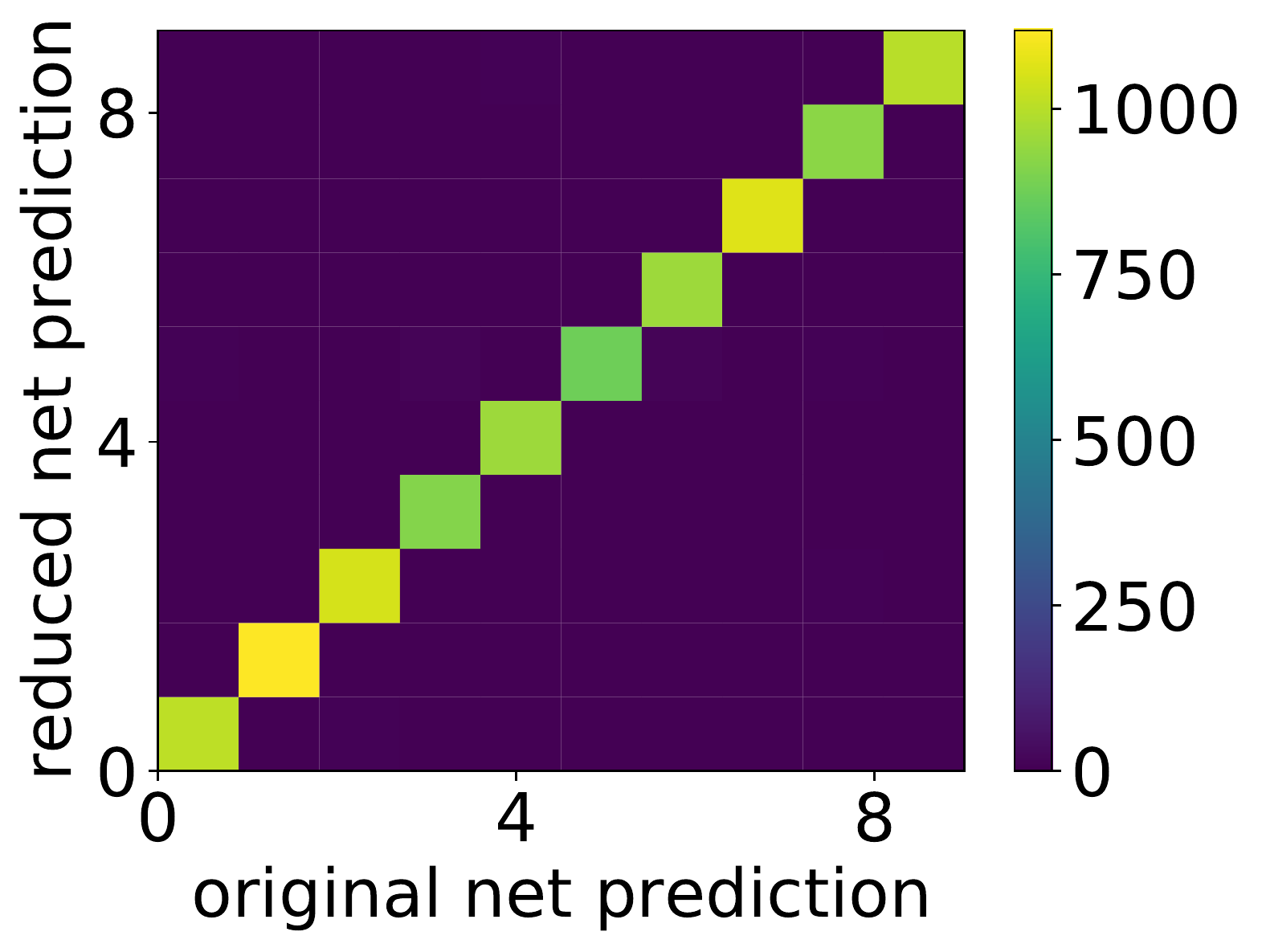}}
	\caption{(a) The training loss of the initial network on MNIST. The blue point is selected for reduction. (b) The normalized inner product of input weights for different neurons. The abscissa and ordinate represent neuron index. Neurons in ``low amplitude'' region has much lower amplitude than others, hence are removed. (c) The training loss of the reduced network. Blue dash indicates the same loss value as the blue dash in (a). The blue point is selected as a representative for comparison.  (d) Prediction similarity. For each grid, color indicates the frequency of that prediction pair. 	\label{fig:expand_mnist}}
\end{figure}

\section{Preliminaries}

\subsection{Deep Neural Networks}
Consider $L$-layer ($L\geq 2$) fully-connected DNNs with a general differentiable activation function. We regard the input as the $0$-th layer and the output as the $L$-th layer. Let $m_l$ be the number of neurons in the $l$-th layer. In particular, $m_0=d$ and $m_L=d'$. For any $i,k\in \sN$ and $i<k$, we denote $[i:k]=\{i,i+1,\ldots,k\}$. In particular, we denote $[k]:=\{1,2,\ldots,k\}$.
Given weights $W^{[l]}\in \sR^{m_l\times m_{l-1}}$ and bias $b^{[l]}\in\sR^{m_{l}}$ for $l\in[L]$, we define the collection of parameters $\vtheta$ as a $2L$-tuple (an ordered list of $2L$ elements) whose elements are matrices or vectors
\begin{equation}
    \vtheta=\Big(\vtheta|_1,\cdots,\vtheta|_L\Big)=\Big(\mW^{[1]},\vb^{[1]},\ldots,\mW^{[L]},\vb^{[L]}\Big).
\end{equation}
where the $l$-th layer parameters of $\vtheta$ is the ordered pair $\vtheta|_{l}=\Big(\mW^{[l]},\vb^{[l]}\Big),\quad l\in[L]$.
We may misuse of notation and identify $\vtheta$ with its vectorization $\mathrm{vec}(\vtheta)\in \sR^M$ with $M=\sum_{l=0}^{L-1}(m_l+1) m_{l+1}$.

Given $\vtheta\in \sR^M$, the neural network function $\vf_{\vtheta}(\cdot)$ is defined recursively. First, we write $\vf^{[0]}_{\vtheta}(\vx)=\vx$ for all $\vx\in\sR^d$. Then for $l\in[L-1]$, $\vf^{[l]}_{\vtheta}$ is defined recursively as 
$\vf^{[l]}_{\vtheta}(\vx)=\sigma (\mW^{[l]} \vf^{[l-1]}_{\vtheta}(\vx)+\vb^{[l]})$.
Finally, we denote
\begin{equation}
    \vf_{\vtheta}(\vx)=\vf(\vx,\vtheta)=\vf^{[L]}_{\vtheta}(\vx)=\mW^{[L]} \vf^{[L-1]}_{\vtheta}(\vx)+\vb^{[L]}.
\end{equation}
For notational simplicity, we may drop the subscript $\vtheta$ in $\vf^{[l]}_{\vtheta}$, $l\in[0:L]$.

We introduce the following notions for the convenience of the presentation in this paper. 
\begin{definition}[\textbf{Wider/narrower DNN}]
    We write $\mathrm{NN}(\{m_l\}_{l=0}^{L})$ for a fully-connected neural network with width $(m_0,\ldots,m_L)$.
    Given two $L$-layer ($L\geq 2$) fully-connected neural networks $\mathrm{NN}(\{m_l\}_{l=0}^{L})$ and $\mathrm{NN}'(\{m'_l\}_{l=0}^{L})$, if $m'_0=m_0$, $m'_L=m_L$, and for any $l\in[L-1]$, $m'_l\geq m_l$ and $K=\sum_{l=1}^{L-1}(m'_l-m_l)\in\sN_+$, then we say that $\mathrm{NN}'(\{m'_l\}_{l=0}^{L})$ is $K$-neuron wider than $\mathrm{NN}(\{m_l\}_{l=0}^{L})$ and $\mathrm{NN}(\{m_l\}_{l=0}^{L})$ $K$-neuron narrower than $\mathrm{NN}'(\{m'_l\}_{l=0}^{L})$. 
\end{definition}

\subsection{Loss function}
The training data set is denoted as  $S=\{(\vx_i,\vy_i)\}_{i=1}^n$, where $\vx_i\in\sR^d$, $\vy_i\in \sR^{d'}$. For simplicity, here we assume an unknown function $\vy$ satisfying $\vy(\vx_i)=\vy_i$ for $i\in[n]$. The empirical risk reads as
\begin{equation}
    \RS(\vtheta)=\frac{1}{n}\sum_{i=1}^n\ell(\vf(\vx_i,\vtheta),\vy(\vx_i))=\Exp_S\ell(\vf(\vx,\vtheta),\vy).
\end{equation}
where the expectation $\Exp_S h(\vx):=\frac{1}{n}\sum_{i=1}^n h(\vx_i)$ for any function $h:\sR^d\to \sR$ and the loss function $\ell(\cdot,\cdot)$ is differentiable and the derivative of $\ell$ with respect to its first argument is denoted by $\nabla\ell(\vy,\vy^*)$.  Generally, we always take derivatives/gradients of $\ell$ in its first argument with respect to any parameter.
We consider gradient flow of $\RS$ as the training dynamics, i.e., $\D \vtheta/\D t = -\nabla_{\vtheta} \RS(\vtheta)$ with $\vtheta(0) = \vtheta_0$.

We define the error vectors $\vz_{\vtheta}^{[l]}=\nabla_{\vf^{[l]}}\ell$ for $l\in[L]$ and the feature gradients $\vg_{\vtheta}^{[L]}=\mathbf{1}$ and $\vg^{[l]}_{\vtheta} =\sigma^{(1)}\left(\mW^{[l]} \vf^{[l-1]}_{\vtheta}+\vb^{[l]}\right)$ for $l\in[L-1]$. Here $\sigma^{(1)}$ is the first derivative of $\sigma$. We call $\vf^{[l]}_{\vtheta}$, $l\in[L]$ feature vectors.
The collections of feature vectors, feature gradients, and error vectors are $\vF_{\vtheta}= \{\vf^{[l]}_{\vtheta}\}_{l=1}^L,
    \vG_{\vtheta}
    = \{\vg^{[l]}_{\vtheta}\}_{l=1}^L,
    \vZ_{\vtheta}
    = \{\vz^{[l]}_{\vtheta}\}_{l=1}^L.$ 
Using backpropagation, we can calculate the gradients as follows
\begin{align*}
    \vz_{\vtheta}^{[L]}
    &=\nabla\ell,\quad
    \vz_{\vtheta}^{[l]}
    = (\mW^{[l+1]})^\T\vz_{\vtheta}^{[l+1]}\circ\vg_{\vtheta}^{[l+1]},\quad l\in[L-1],\\
    \nabla_{\mW^{[l]}}\ell
    &= \vz_{\vtheta}^{[l]}\circ\vg_{\vtheta}^{[l]}(\vf_{\vtheta}^{[l-1]})^\T,\quad
    \nabla_{\vb^{[l]}}\ell
    = \vz_{\vtheta}^{[l]}\circ\vg_{\vtheta}^{[l]},\quad l\in[L].
\end{align*}
Here we use $\circ$ for the Hadamard product of two matrices of the same dimension.

\section{Critical embedding}\label{sec:theory}

We introduce the one-step embedding for the DNNs which will lead us to general embeddings.

\begin{definition}[\textbf{one-step embedding}]
    Given a $L$-layer ($L\geq 2$) fully-connected neural network with width $(m_0,\ldots,m_{L})$ and network parameters
    $\vtheta=(\mW^{[1]},\vb^{[1]},\cdots,\mW^{[L]},\vb^{[L]})\in\sR^M$,
    for any $l\in[L-1]$ and any $s\in[m_l]$, we define the linear operators $\fT_{l,s}$ and $\fV_{l,s}$ applying on $\vtheta$ as follows
    \begin{align*}
        \fT_{l,s}(\vtheta)|_k
        &=\vtheta|_k,\quad k\neq l,l+1,\\
        \fT_{l,s}(\vtheta)|_l
        &= \left(\left[ {\begin{array}{cc}
        \mW^{[l]} \\
        \mW^{[l]}_{s,[1:m_{l-1}]} \\
        \end{array} } \right],
        \left[ {\begin{array}{cc}
        \vb^{[l]} \\
        \vb^{[l]}_s \\
        \end{array} } \right]\right),\quad
        \fT_{l,s}(\vtheta)|_{l+1}
        = \left(
        \left[\mW^{[l+1]},\mzero_{m_{l+1}\times 1}\right],
        \vb^{[l+1]}\right),\\
        \fV_{l,s}(\vtheta)|_k
        &=\left(\mzero_{m_{k}\times m_{k-1}},\mzero_{m_{k}\times 1}\right),\quad k\neq l,l+1,\\
        \fV_{l,s}(\vtheta)|_l
        &=\left(\mzero_{(m_{l}+1)\times m_{l-1}},\mzero_{(m_{l}+1)\times 1}\right),\\
        \fV_{l,s}(\vtheta)|_{l+1}
        &= \left(
        \left[\mzero_{m_{l+1}\times (s-1)},-\mW^{[l+1]}_{[1:m_{l+1}],s},\mzero_{m_{l+1}\times (m_{l}-s)},\mW^{[l+1]}_{[1:m_{l+1}],s}\right],\mzero_{m_{l+1}\times 1}\right).
    \end{align*}
     Then the one-step embedding operator $\fT_{l,s}^{\alpha}$ is defined as for any $\vtheta\in\sR^M$ 
    \begin{equation*}
        \fT_{l,s}^{\alpha}(\vtheta)=(\fT_{l,s}+\alpha\fV_{l,s})(\vtheta).
    \end{equation*}
    Note that the resulting parameter $\fT_{l,s}^{\alpha}(\vtheta)$ corresponds to a $L$-layer fully-connected neural network with width $(m_0,\ldots,m_{l-1},m_l+1,m_{l+1},\ldots,m_{L})$.
\end{definition}
An illustration of $\fT_{l,s}$, $\fV_{l,s}$, and $\fT_{l,s}^{\alpha}$ can be found in Fig. S1 in Appendix.

\begin{lemma} \label{lem:repinv}
    Given a $L$-layer ($L\geq 2$) fully-connected neural network with width $(m_0,\ldots,m_{L})$, for any network parameters    $\vtheta=(\mW^{[1]},\vb^{[1]},\cdots,\mW^{[L]},\vb^{[L]})$ and for any $l\in[L-1]$, $s\in[m_l]$, we have the expressions for $\vtheta':=\fT_{l,s}^{\alpha}(\vtheta)$ \\
    (i) feature vectors in $\vF_{\vtheta'}$: $\vf^{[l']}_{\vtheta'}=\vf^{[l']}_{\vtheta}$, $l'\neq l$ and $\vf^{[l]}_{\vtheta'}=\left[(\vf_{\vtheta}^{[l]})^\T,(\vf_{\vtheta}^{[l]})_s\right]^\T$;
    
    (ii) feature gradients in $\vG_{\vtheta'}$: $\vg^{[l']}_{\vtheta'}=\vg^{[l']}_{\vtheta}$, $l'\neq l$ and $\vg^{[l]}_{\vtheta'}=
    \left[(\vg^{[l]}_{\vtheta})^\T,(\vg^{[l]}_{\vtheta})_s\right]^\T$;
    
    (iii) error vectors in $\vZ_{\vtheta'}$: 
    $\vz^{[l']}_{\vtheta'}=\vz^{[l']}_{\vtheta}$, $l'\neq l$\\
    and $\vz^{[l]}_{\vtheta'}=
    \left[ (\vz_{\vtheta}^{[l]})^\T_{[1:s-1]},(1-\alpha)(\vz_{\vtheta}^{[l]})_s, (\vz_{\vtheta}^{[l]})^\T_{[s+1:m_l]},\alpha (\vz_{\vtheta}^{[l]})_s \right]^\T$.
\end{lemma}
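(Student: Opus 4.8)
Write $\vtheta':=\fT_{l,s}^{\alpha}(\vtheta)=(\fT_{l,s}+\alpha\fV_{l,s})(\vtheta)$ and unpack its parameters from the definition: $\vtheta'$ agrees with $\vtheta$ in every layer except $l$ and $l+1$; in layer $l$ the weight matrix $\mW'^{[l]}$ is $\mW^{[l]}$ with an extra row equal to its $s$-th row and $\vb'^{[l]}$ is $\vb^{[l]}$ with an extra entry $\vb^{[l]}_s$ (so neuron $m_l+1$ of layer $l$ in $\vtheta'$ duplicates neuron $s$); in layer $l+1$ the matrix $\mW'^{[l+1]}$ equals $\mW^{[l+1]}$ except that its $s$-th column is scaled to $(1-\alpha)\mW^{[l+1]}_{[1:m_{l+1}],s}$ and an extra $(m_l+1)$-th column $\alpha\mW^{[l+1]}_{[1:m_{l+1}],s}$ is appended, while $\vb'^{[l+1]}=\vb^{[l+1]}$. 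The plan is to propagate these two local changes through the forward recursion, which yields (i) and (ii), and then through backpropagation, which yields (iii).

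For (i) I would induct forward on the layer index. For $l'<l$ the recursion for $\vf^{[l']}_{\vtheta'}$ involves only layers $\le l'$, so $\vf^{[l']}_{\vtheta'}=\vf^{[l']}_{\vtheta}$ (for $l=1$ this is just $\vf^{[0]}=\vx$). At $l'=l$, since $\vf^{[l-1]}_{\vtheta'}=\vf^{[l-1]}_{\vtheta}$ and row/entry $s$ of $(\mW^{[l]},\vb^{[l]})$ is duplicated, the preactivation of the new neuron equals that of neuron $s$, giving $\vf^{[l]}_{\vtheta'}=[(\vf^{[l]}_{\vtheta})^\T,(\vf^{[l]}_{\vtheta})_s]^\T$. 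At $l'=l+1$, coordinates $s$ and $m_l+1$ of $\vf^{[l]}_{\vtheta'}$ both equal $(\vf^{[l]}_{\vtheta})_s$, so in $\mW'^{[l+1]}\vf^{[l]}_{\vtheta'}$ the two scaled columns recombine via $(1-\alpha)+\alpha=1$ into $\mW^{[l+1]}_{[1:m_{l+1}],s}(\vf^{[l]}_{\vtheta})_s$; together with $\vb'^{[l+1]}=\vb^{[l+1]}$ this makes the preactivation, hence $\vf^{[l+1]}_{\vtheta'}$, equal to that of $\vtheta$, and for $l'>l+1$ nothing has changed so the induction closes. Claim (ii) is then immediate, since $\vg^{[l']}_{\vtheta'}$ is $\sigma^{(1)}$ of the preactivations just computed: $\vg^{[l']}_{\vtheta'}=\vg^{[l']}_{\vtheta}$ for $l'\neq l$ (including $\vg^{[L]}_{\vtheta'}=\mathbf{1}=\vg^{[L]}_{\vtheta}$ as $m_L$ is unchanged), and $\vg^{[l]}_{\vtheta'}=[(\vg^{[l]}_{\vtheta})^\T,(\vg^{[l]}_{\vtheta})_s]^\T$.

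For (iii) I would induct backward using $\vz^{[L]}_{\vtheta'}=\nabla\ell(\vf_{\vtheta'}(\vx),\vy)$ and $\vz^{[l']}_{\vtheta'}=(\mW'^{[l'+1]})^\T(\vz^{[l'+1]}_{\vtheta'}\circ\vg^{[l'+1]}_{\vtheta'})$. By (i), $\vz^{[L]}_{\vtheta'}=\vz^{[L]}_{\vtheta}$, and for $l<l'<L$ the unchanged $\mW'^{[l'+1]}=\mW^{[l'+1]}$ and $\vg^{[l'+1]}_{\vtheta'}=\vg^{[l'+1]}_{\vtheta}$ give $\vz^{[l']}_{\vtheta'}=\vz^{[l']}_{\vtheta}$. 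At $l'=l$, the transpose $(\mW'^{[l+1]})^\T$ has row $s$ equal to $(1-\alpha)(\mW^{[l+1]}_{[1:m_{l+1}],s})^\T$, row $m_l+1$ equal to $\alpha(\mW^{[l+1]}_{[1:m_{l+1}],s})^\T$, and its other rows as in $\vtheta$; applying it to $\vz^{[l+1]}_{\vtheta'}\circ\vg^{[l+1]}_{\vtheta'}=\vz^{[l+1]}_{\vtheta}\circ\vg^{[l+1]}_{\vtheta}$ therefore produces the vector with $s$-th entry $(1-\alpha)(\vz^{[l]}_{\vtheta})_s$, $(m_l+1)$-th entry $\alpha(\vz^{[l]}_{\vtheta})_s$, and remaining entries those of $\vz^{[l]}_{\vtheta}$ — exactly the claimed $\vz^{[l]}_{\vtheta'}$. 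Finally, for $l'<l$ the split re-merges at layer $l-1$: $(\mW'^{[l]})^\T$ has its $(m_l+1)$-th column equal to its $s$-th column (the duplicated row), while Hadamarding the $\vz^{[l]}_{\vtheta'}$ just found with $\vg^{[l]}_{\vtheta'}=[(\vg^{[l]}_{\vtheta})^\T,(\vg^{[l]}_{\vtheta})_s]^\T$ makes its $s$-th and $(m_l+1)$-th entries $(1-\alpha)(\vz^{[l]}_{\vtheta})_s(\vg^{[l]}_{\vtheta})_s$ and $\alpha(\vz^{[l]}_{\vtheta})_s(\vg^{[l]}_{\vtheta})_s$; since these are contracted against identical columns they sum back to $(\vz^{[l]}_{\vtheta})_s(\vg^{[l]}_{\vtheta})_s$, so $\vz^{[l-1]}_{\vtheta'}=\vz^{[l-1]}_{\vtheta}$, and the lower layers, being unchanged, follow. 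I expect the only real obstacle to be bookkeeping — separating the cases $l'<l$, $l'=l$, $l'=l+1$, $l'>l+1$, tracking which rows/columns are copied versus scaled by $\alpha$ or $1-\alpha$, and handling the edge cases $l=1$ and $l'=L$ — the mathematical content being just the cancellation $(1-\alpha)+\alpha=1$, used in the forward pass for features and, transposed, in the backward pass for errors.
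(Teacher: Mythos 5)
Your proposal is correct and follows essentially the same route as the paper's proof: forward recursion through layers $l'<l$, $l$, $l+1$, $l'>l+1$ for (i)--(ii), then backward recursion for (iii), with the splitting/recombination at layers $l$ and $l-1$ driven by the same $(1-\alpha)+\alpha=1$ cancellation (which the paper writes equivalently as the $\alpha\fV_{l,s}$ term annihilating the duplicated feature coordinate).
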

 An illustration of $\vF_{\vtheta}$ and $\vZ_{\vtheta}$ can be found in Fig. S2 in Appendix.

\begin{prop}[\textbf{one-step embedding preserves network properties}] \label{prop:one-step-embed}
    Given a $L$-layer ($L\geq 2$) fully-connected neural network with width $(m_0,\ldots,m_{L})$, for any network parameters $\vtheta=(\mW^{[1]},\vb^{[1]},\cdots,\mW^{[L]},\vb^{[L]})$ and for any $l\in[L-1]$, $s\in[m_l]$, the following network properties are preserved for $\vtheta'=\fT_{l,s}^{\alpha}(\vtheta)$:
    
    (i) output function is preserved: $f_{\vtheta'}(\vx)=f_{\vtheta}(\vx)$ for all $\vx$;
    
    (ii) empirical risk is preserved: $\RS(\vtheta')=\RS(\vtheta)$;
    
    (iii) the sets of features are preserved:  $\left\{\left(\vf^{[l]}_{\vtheta'}\right)_i\right\}_{i\in[m_{l}+1]}=\left\{\left(\vf^{[l]}_{\vtheta}\right)_i\right\}_{i\in[m_{l}]}$ and\\
    $\left\{\left(\vf^{[l']}_{\vtheta'}\right)_i\right\}_{i\in[m_{l'}]}=\left\{\left(\vf^{[l']}_{\vtheta}\right)_i\right\}_{i\in[m_{l'}]}$ for $l'\in[L]\backslash\{l\}$;
    
\end{prop}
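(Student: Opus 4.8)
The plan is to deduce all three items from Lemma~\ref{lem:repinv}(i), which already gives the exact relationship between the feature vectors of $\vtheta$ and $\vtheta'=\fT_{l,s}^{\alpha}(\vtheta)$. The key observation is that one-step embedding only modifies layers $l$ and $l+1$ in a structured way: it duplicates row $s$ of $\mW^{[l]}$ (and the corresponding bias entry), and it splits column $s$ of $\mW^{[l+1]}$ into an $\alpha$-fraction and a $(1-\alpha)$-fraction, placing the $(1-\alpha)$-part in the original column $s$ slot and the $\alpha$-part in the new column $m_l+1$. So the work is to push this bookkeeping through the forward recursion.

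For (iii), the statements for $l'\neq l$ are immediate from Lemma~\ref{lem:repinv}(i), which says $\vf^{[l']}_{\vtheta'}=\vf^{[l']}_{\vtheta}$ as vectors (hence certainly as sets of entries). For layer $l$ itself, Lemma~\ref{lem:repinv}(i) gives $\vf^{[l]}_{\vtheta'}=\left[(\vf_{\vtheta}^{[l]})^\T,(\vf_{\vtheta}^{[l]})_s\right]^\T$, i.e.\ the new feature vector is the old one with its $s$-th entry repeated; the set of distinct entries is therefore unchanged, which is exactly what (iii) asserts. I would write this out as $\left\{\left(\vf^{[l]}_{\vtheta'}\right)_i\right\}_{i\in[m_l+1]}=\left\{\left(\vf^{[l]}_{\vtheta}\right)_i\right\}_{i\in[m_l]}\cup\{(\vf^{[l]}_{\vtheta})_s\}=\left\{\left(\vf^{[l]}_{\vtheta}\right)_i\right\}_{i\in[m_l]}$.

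For (i), I would compute $\vf^{[L]}_{\vtheta'}=\vf_{\vtheta'}(\vx)$ directly. By Lemma~\ref{lem:repinv}(i) the features strictly below layer $L$ at the relevant junction satisfy $\vf^{[L-1]}_{\vtheta'}=\vf^{[L-1]}_{\vtheta}$ when $l<L-1$, so $f_{\vtheta'}=\mW^{[L]}\vf^{[L-1]}_{\vtheta'}+\vb^{[L]}=f_{\vtheta}$ trivially since $\vtheta'|_L=\vtheta|_L$ in that case. The only genuine case is $l=L-1$: then $\vf^{[L-1]}_{\vtheta'}$ has the $s$-th entry duplicated as a new last entry, while $\mW^{[L]}$ has been replaced by $\left[\mW^{[L]},\mzero\right]$ plus the $\fV$-correction that moves $\alpha\mW^{[L]}_{\cdot,s}$ to the last column and subtracts it from column $s$. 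Multiplying out, the contribution of the duplicated feature entry is $\alpha (\vf^{[L-1]}_{\vtheta})_s\mW^{[L]}_{\cdot,s}$ from the new column and $(1-\alpha)(\vf^{[L-1]}_{\vtheta})_s\mW^{[L]}_{\cdot,s}$ from the old column~$s$, summing to exactly $(\vf^{[L-1]}_{\vtheta})_s\mW^{[L]}_{\cdot,s}$, so $f_{\vtheta'}(\vx)=f_{\vtheta}(\vx)$. (More conceptually, this is the $l'=l+1$ case of the forward recursion combined with the definition of $\fT^{\alpha}_{l,s}$ on layer $l+1$, and it is the same computation that underlies the $\vz^{[l]}$ formula in Lemma~\ref{lem:repinv}(iii).) Finally (ii) follows from (i): $\RS(\vtheta')=\Exp_S\ell(\vf_{\vtheta'}(\vx),\vy)=\Exp_S\ell(\vf_{\vtheta}(\vx),\vy)=\RS(\vtheta)$.

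The main obstacle is purely notational rather than conceptual: keeping the index bookkeeping straight when $l=L-1$ (so that the ``splitting of output weights into $\alpha$ and $1-\alpha$ parts'' interacts with the final linear layer) and making sure the $\fV_{l,s}$ correction term is applied with the right sign to the right column. Once Lemma~\ref{lem:repinv} is in hand, there is no analytic difficulty — everything is a finite-dimensional linear algebra identity that can be verified entrywise. I would present (iii) first since it is the cleanest consequence of Lemma~\ref{lem:repinv}(i), then (i) via the forward pass at the split layer, then (ii) as a one-line corollary of (i).
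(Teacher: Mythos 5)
Your proposal is correct and follows essentially the same route as the paper, which derives all three properties as direct consequences of Lemma~\ref{lem:repinv}. The only redundancy is your separate treatment of the $l=L-1$ case for (i): Lemma~\ref{lem:repinv}(i) already asserts $\vf^{[l']}_{\vtheta'}=\vf^{[l']}_{\vtheta}$ for every $l'\neq l$, including $l'=L$, so $f_{\vtheta'}=f_{\vtheta}$ is immediate without re-running the forward-pass computation at the split layer.
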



\begin{theorem}[\textbf{criticality preserving}]\label{lem:crit-prese}
    Given a $L$-layer ($L\geq 2$) fully-connected neural network with width $(m_0,\ldots,m_{L})$, for any network parameters    $\vtheta=(\mW^{[1]},\vb^{[1]},\cdots,\mW^{[L]},\vb^{[L]})$ and for any $l\in[L-1]$, $s\in[m_l]$, if $\nabla_{\vtheta}\RS(\vtheta)=\mzero$, then $\nabla_{\vtheta}\RS(\vtheta')=\mzero$.
\end{theorem}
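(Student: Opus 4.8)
The strategy is to reduce the claim to Lemma~\ref{lem:repinv} together with the backpropagation formulas for the parameter gradient recorded in the Preliminaries. Since $\RS=\Exp_S\ell$ is an average over the data (a finite sum, or, when $n=+\infty$, with the regularity granted by the differentiability hypothesis), we have $\nabla_{\vtheta}\RS=\Exp_S\nabla_{\vtheta}\ell$, and the hypothesis $\nabla_{\vtheta}\RS(\vtheta)=\mzero$ says exactly that
\[
\Exp_S\bigl[\vz_{\vtheta}^{[k]}\circ\vg_{\vtheta}^{[k]}(\vf_{\vtheta}^{[k-1]})^\T\bigr]=\mzero,\qquad \Exp_S\bigl[\vz_{\vtheta}^{[k]}\circ\vg_{\vtheta}^{[k]}\bigr]=\mzero,\qquad k\in[L],
\]
where $\vf^{[0]}_{\vtheta}\equiv\vx$. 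I would then write down the analogous blocks at $\vtheta'=\fT_{l,s}^{\alpha}(\vtheta)$, substitute the expressions for $\vF_{\vtheta'},\vG_{\vtheta'},\vZ_{\vtheta'}$ from Lemma~\ref{lem:repinv}, and check that each block vanishes, splitting into three cases by the layer index $k$.

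For $k\notin\{l,l+1\}$, Lemma~\ref{lem:repinv} gives $\vz^{[k]}_{\vtheta'}=\vz^{[k]}_{\vtheta}$, $\vg^{[k]}_{\vtheta'}=\vg^{[k]}_{\vtheta}$, and $\vf^{[k-1]}_{\vtheta'}=\vf^{[k-1]}_{\vtheta}$ (note $k-1\neq l$ since $k\neq l+1$); hence the layer-$k$ blocks of $\nabla_{\vtheta}\RS(\vtheta')$ are literally those of $\nabla_{\vtheta}\RS(\vtheta)$ and vanish. For $k=l+1$ only the feature vector changes, $\vf^{[l]}_{\vtheta'}=[(\vf^{[l]}_{\vtheta})^\T,(\vf^{[l]}_{\vtheta})_s]^\T$, while $\vz^{[l+1]}_{\vtheta'}=\vz^{[l+1]}_{\vtheta}$ and $\vg^{[l+1]}_{\vtheta'}=\vg^{[l+1]}_{\vtheta}$; so the first $m_l$ columns of $\nabla_{\mW^{[l+1]}}\RS(\vtheta')$ are the columns of $\nabla_{\mW^{[l+1]}}\RS(\vtheta)$, the new last column is its $s$-th column, and $\nabla_{\vb^{[l+1]}}\RS(\vtheta')=\nabla_{\vb^{[l+1]}}\RS(\vtheta)$ --- all zero. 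For $k=l$ we have $\vf^{[l-1]}_{\vtheta'}=\vf^{[l-1]}_{\vtheta}$, while $\vg^{[l]}_{\vtheta'}$ duplicates its $s$-th entry and $\vz^{[l]}_{\vtheta'}$ replaces its $s$-th entry by $(1-\alpha)(\vz^{[l]}_{\vtheta})_s$ and appends $\alpha(\vz^{[l]}_{\vtheta})_s$; reading $\nabla_{\mW^{[l]}}\RS(\vtheta')=\Exp_S[\vz^{[l]}_{\vtheta'}\circ\vg^{[l]}_{\vtheta'}(\vf^{[l-1]}_{\vtheta})^\T]$ row by row, rows $i\notin\{s,m_l+1\}$ equal the corresponding rows of $\nabla_{\mW^{[l]}}\RS(\vtheta)$, row $s$ equals $(1-\alpha)$ times its $s$-th row, and row $m_l+1$ equals $\alpha$ times its $s$-th row --- all zero; the same scalar-multiple argument gives $\nabla_{\vb^{[l]}}\RS(\vtheta')=\mzero$. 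Collecting the three cases, $\nabla_{\vtheta}\RS(\vtheta')=\mzero$.

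The only step needing genuine care is the case $k=l$: here one must use the \emph{entrywise} (Hadamard) structure of the gradient formula to see that splitting the error vector into the $\alpha$- and $(1-\alpha)$-pieces and duplicating the feature-gradient coordinate turns the (already zero) $s$-th row of the narrow-network gradient block into two rows, each a scalar multiple of it, hence still zero; everything else is direct substitution of Lemma~\ref{lem:repinv}. It is worth noting that, combined with Prop.~\ref{prop:one-step-embed} (which already gives $\RS(\vtheta')=\RS(\vtheta)$), this computation applied along the one-parameter family $\{\fT^{\alpha}_{l,s}(\vtheta)\}_{\alpha\in\sR}$ is what later produces the degenerate critical affine subspaces; for the present theorem the argument above suffices.
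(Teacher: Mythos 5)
Your proposal is correct and follows essentially the same route as the paper's own proof: express each gradient block via the backpropagation formula $\Exp_S[\vz^{[k]}\circ\vg^{[k]}(\vf^{[k-1]})^\T]$, substitute the expressions from Lemma~\ref{lem:repinv}, and observe that every block of $\nabla_{\vtheta}\RS(\vtheta')$ is either an identical copy, a duplicated column, or a scalar multiple ($\alpha$ or $1-\alpha$) of a vanishing block of $\nabla_{\vtheta}\RS(\vtheta)$. The case split by layer index and the row-by-row analysis at layer $l$ match the paper's argument.
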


\begin{lemma}[\textbf{increment of the degree of degeneracy}] \label{lem:degen}
    Given a $L$-layer ($L\geq 2$) fully-connected neural network with width $(m_0,\ldots,m_{L})$, if there exists $l\in[L-1]$, $s\in[m_l]$, and a $q$-dimensional manifold $\fM$ consisting of critical points of $\RS$ such that for any $\vtheta\in \fM$, $\mW^{[l+1]}_{[1:m_{l+1}],s}\neq \mzero$, then $\fM':=\{\fT_{l,s}^{\alpha}(\vtheta)|\vtheta\in\fM,\alpha\in \sR\}$ is a $(q+1)$-dimensional manifold consists of critical points for the corresponding $L$-layer fully-connected neural network with width $(m_0,\ldots,m_{l-1},m_l+1,m_{l+1},\ldots,m_{L})$.
\end{lemma}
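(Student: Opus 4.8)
The plan is to realize $\fM'$ as the image of an explicit smooth map out of the $(q+1)$-dimensional manifold $\fM\times\sR$ and to verify that this map is an embedding, criticality being inherited directly from Theorem~\ref{lem:crit-prese}. Write $M'$ for the parameter dimension of the network of width $(m_0,\ldots,m_{l-1},m_l+1,m_{l+1},\ldots,m_L)$ and define $\Phi\colon \fM\times\sR\to\sR^{M'}$ by $\Phi(\vtheta,\alpha)=\fT_{l,s}^{\alpha}(\vtheta)=\fT_{l,s}(\vtheta)+\alpha\fV_{l,s}(\vtheta)$, which is polynomial in $(\vtheta,\alpha)$, hence smooth, and restricts smoothly to $\fM\times\sR$; by construction $\fM'=\Phi(\fM\times\sR)$, and since $\fM$ and $\sR$ are connected so is $\fM'$. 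For criticality I would apply Theorem~\ref{lem:crit-prese} to each $\vtheta\in\fM$ (critical by hypothesis) and each $\alpha\in\sR$ to obtain $\nabla_{\vtheta}\RS(\Phi(\vtheta,\alpha))=\vzero$; Proposition~\ref{prop:one-step-embed}(ii) gives $\RS(\Phi(\vtheta,\alpha))=\RS(\vtheta)$, so if $\fM$ moreover carries a single loss value (as for a critical submanifold) then so does $\fM'$. It then remains only to prove that $\Phi$ is a smooth embedding, so that $\fM'=\Phi(\fM\times\sR)$ is an embedded submanifold of dimension $q+1$.

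The heart of the matter is injectivity of both $\Phi$ and $\D\Phi$, and this is exactly where the hypothesis $\mW^{[l+1]}_{[1:m_{l+1}],s}\neq\mzero$ on $\fM$ enters. Suppose $\fT_{l,s}^{\alpha}(\vtheta)=\fT_{l,s}^{\tilde\alpha}(\tilde\vtheta)$ with $\vtheta,\tilde\vtheta\in\fM$, and compare the two parameter lists layer by layer: all layers other than $l$ and $l+1$ are copied verbatim, and the first $m_l$ rows of layer $l$ reproduce $\vtheta|_l$, so $\vtheta|_k=\tilde\vtheta|_k$ for $k\neq l+1$; in layer $l+1$ the columns indexed by $[1:m_l]\setminus\{s\}$ give equality of the corresponding columns of $\mW^{[l+1]}$, while columns $s$ and $m_l+1$ read $(1-\alpha)\mW^{[l+1]}_{[1:m_{l+1}],s}=(1-\tilde\alpha)\tilde\mW^{[l+1]}_{[1:m_{l+1}],s}$ and $\alpha\mW^{[l+1]}_{[1:m_{l+1}],s}=\tilde\alpha\tilde\mW^{[l+1]}_{[1:m_{l+1}],s}$. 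Adding the last two kills the dependence on $\alpha,\tilde\alpha$ and yields $\mW^{[l+1]}_{[1:m_{l+1}],s}=\tilde\mW^{[l+1]}_{[1:m_{l+1}],s}$, hence $\vtheta=\tilde\vtheta$; since this common column is nonzero, either equation then forces $\alpha=\tilde\alpha$. The same bookkeeping governs the differential: for $(\delta\vtheta,\dot\alpha)\in T_{\vtheta}\fM\times\sR$ one has $\D\Phi|_{(\vtheta,\alpha)}(\delta\vtheta,\dot\alpha)=\fT_{l,s}^{\alpha}(\delta\vtheta)+\dot\alpha\,\fV_{l,s}(\vtheta)$, and setting this to $\vzero$, columns $s$ and $m_l+1$ of layer $l+1$ again add to force the $s$-th column of $\mW^{[l+1]}$ of $\delta\vtheta$ to vanish, leaving $\dot\alpha\,\mW^{[l+1]}_{[1:m_{l+1}],s}=\vzero$ so $\dot\alpha=0$, and then the remaining components give $\delta\vtheta=\vzero$. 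Thus $\Phi$ is an injective immersion.

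Finally, to upgrade ``injective immersion'' to ``embedded submanifold'' I would produce a continuous left inverse of $\Phi$ on $\fM'$: the linear map $P\colon\sR^{M'}\to\sR^{M}$ that copies all layers except $l$ and $l+1$, truncates layer $l$ to its first $m_l$ neurons, and in layer $l+1$ discards column $m_l+1$, keeps the columns indexed by $[1:m_l]\setminus\{s\}$, sets column $s$ to the sum of the old columns $s$ and $m_l+1$, and keeps the bias, satisfies $P\circ\Phi(\vtheta,\alpha)=\vtheta$ by the same cancellation; and on $\fM'$ the scalar $\alpha$ is recovered as a continuous function of columns $s$ and $m_l+1$ of the layer-$(l+1)$ weight, rational with denominator $\lVert\mW^{[l+1]}_{[1:m_{l+1}],s}\rVert^2$ which is nonzero on $\fM'$ by hypothesis. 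Hence $\Phi^{-1}$ is continuous, $\Phi$ is a homeomorphism onto $\fM'$, and $\fM'$ is an embedded $(q+1)$-dimensional submanifold consisting of critical points. The one genuinely delicate point is the one highlighted above: that the splitting parameter $\alpha$ contributes an \emph{independent} direction rather than being absorbable into motion along $\fM$ or into a rescaling of the split output weights---precisely what $\mW^{[l+1]}_{[1:m_{l+1}],s}\neq\mzero$ guarantees---and it is invoked verbatim in the injectivity, immersion, and continuous-inverse steps. I would also check the degenerate cases $l=1$ (layer $0$ has no parameters) and $l+1=L$ (layer $L$ has no activation), but these change nothing, since $\fV_{l,s}$ only ever affects the layer-$(l+1)$ weight matrix and the appended row of layer $l$.
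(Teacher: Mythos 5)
Your proof is correct and follows essentially the same route as the paper's: parametrize $\fM'$ by $\fM\times\sR$ via $\fT_{l,s}^{\alpha}$ and show the extra direction $\fV_{l,s}(\vtheta)$ is independent of the $\fT_{l,s}$-image directions precisely because the $(m_l+1)$-th column of $\mW^{[l+1]}$ vanishes on the latter but equals $\mW^{[l+1]}_{[1:m_{l+1}],s}\neq\mzero$ on the former. Your write-up is in fact more complete than the paper's, which only verifies the tangent-space dimension and leaves the global injectivity and the topological (embedded-submanifold) condition implicit; your left-inverse construction supplies those missing steps cleanly.
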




\begin{theorem}[\textbf{degeneracy of embedded critical points}] \label{lem:multi-criti}
    Consider two $L$-layer ($L\geq 2$) fully-connected neural networks $\mathrm{NN}_A(\{m_l\}_{l=0}^{L})$ and $\mathrm{NN}_B(\{m'_l\}_{l=0}^{L})$ which is $K$-neuron wider than $\mathrm{NN}_A$. Suppose that the critical point $\vtheta_A=(\mW^{[1]},\vb^{[1]},\cdots,\mW^{[L]},\vb^{[L]})$ satisfies $\mW^{[l]}\neq \mzero$ for each layer $l\in[L]$. Then the parameters $\vtheta_A$ of $\mathrm{NN}_A$ can be critically embedded to a $K$-dimensional critical affine subspace $\fM_B=\{\vtheta_B+\sum_{i=1}^{K}\alpha_i \vv_i|\alpha_i\in \sR \}$ of loss landscape of $\mathrm{NN}_B$. Here $\vtheta_B=(\prod_{i=1}^{K} \fT_{l_i,s_i})(\vtheta_A)$ and $\vv_i=\fT_{l_K,s_K}\cdots \fV_{l_i,s_i} \cdots\fT_{l_1,s_1}\vtheta_A$. 


\end{theorem}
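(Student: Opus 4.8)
The strategy is to build the $K$-dimensional critical affine subspace by iterating the one-step embedding $K$ times, one neuron at a time, and at each step invoking Lemma \ref{lem:degen} to gain one additional degenerate direction. Concretely, fix an ordering $(l_1,s_1),\ldots,(l_K,s_K)$ of the neurons to be split so that, after all $K$ one-step embeddings, the width profile is exactly $(m'_0,\ldots,m'_L)$; such an ordering exists because $\mathrm{NN}_B$ is $K$-neuron wider than $\mathrm{NN}_A$, i.e. $K=\sum_{l=1}^{L-1}(m'_l-m_l)$, so we simply split $m'_l-m_l$ distinct neurons in each layer $l$. First I would note that by Theorem \ref{lem:crit-prese}, applied inductively, $\vtheta_B=\bigl(\prod_{i=1}^{K}\fT_{l_i,s_i}\bigr)(\vtheta_A)=\bigl(\prod_{i=1}^{K}\fT^0_{l_i,s_i}\bigr)(\vtheta_A)$ (taking $\alpha=0$ at each step) is a critical point of $\mathrm{NN}_B$, and by Proposition \ref{prop:one-step-embed}(i)--(ii) the output function and empirical risk are preserved all the way down the chain.

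Next I would run the inductive argument that upgrades this single critical point to the affine subspace. Define $\fM_0=\{\vtheta_A\}$, a $0$-dimensional critical manifold, and for $j\in[K]$ let $\fM_j=\{\fT^{\alpha}_{l_j,s_j}(\vtheta)\mid \vtheta\in\fM_{j-1},\ \alpha\in\sR\}$. To apply Lemma \ref{lem:degen} at step $j$ we must verify that for every $\vtheta\in\fM_{j-1}$, the output weights $\mW^{[l_j+1]}_{[1:m_{l_j+1}],s_j}$ (in the layer-$(l_j{+}1)$ weight matrix of the network after $j-1$ embeddings) are not all zero. This is where the hypothesis $\mW^{[l]}\neq\mzero$ for all $l\in[L]$ enters: the one-step embedding operators $\fT_{l,s}$ and $\fV_{l,s}$ only duplicate or redistribute existing rows/columns of the weight matrices, so every row/column of every weight matrix in the embedded network is, up to the scalars $\alpha_i$ and duplication, one that was present in $\vtheta_A$; in particular the relevant output-weight column is a copy of the $s_j$-th column of $\mW^{[l_j+1]}$ of the original network, which is nonzero because $\mW^{[l_j+1]}\neq\mzero$ — more carefully, one must check that the column being split has not been zeroed out or made to vanish by a previous $\alpha$-redistribution; this requires tracking, in the chosen splitting order, which column index each split acts on and confirming it always points to a ``live'' copy of an original nonzero column. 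Given this, Lemma \ref{lem:degen} yields that $\fM_j$ is a $j$-dimensional critical manifold, and after $K$ steps $\fM_K$ is a $K$-dimensional critical manifold; finally I would unwind the definition to see $\fM_K=\{\vtheta_B+\sum_{i=1}^K\alpha_i\vv_i\mid\alpha_i\in\sR\}$ with $\vv_i=\fT_{l_K,s_K}\cdots\fV_{l_i,s_i}\cdots\fT_{l_1,s_1}\vtheta_A$, using linearity of each $\fT_{l,s}$ and $\fT^{\alpha}_{l,s}=\fT_{l,s}+\alpha\fV_{l,s}$ to expand the composition multilinearly in the $\alpha_i$ and observe that all cross terms $\fV\cdots\fV$ vanish (because each $\fV_{l,s}$ outputs a parameter tuple that is zero in every layer except $l+1$, and a subsequent $\fV_{l',s'}$ then reads off a zero column), leaving exactly the affine form claimed.

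The main obstacle I anticipate is the bookkeeping in that last paragraph: verifying that the only surviving terms in the multilinear expansion of $\prod_i(\fT_{l_i,s_i}+\alpha_i\fV_{l_i,s_i})$ applied to $\vtheta_A$ are the constant term $\vtheta_B$ and the $K$ linear terms $\alpha_i\vv_i$, and — intertwined with this — confirming that the nonvanishing-column hypothesis needed for Lemma \ref{lem:degen} genuinely survives each embedding step regardless of the order and of the earlier $\alpha$'s (since $\alpha$ ranges over all of $\sR$, including values that could in principle cancel a column). The clean way to handle both is to split distinct neurons and to always split a neuron whose output-weight column is an unmodified copy of an original column of $\vtheta_A$; one then argues that $\fV_{l_i,s_i}$ followed later by $\fV_{l_j,s_j}$ composes to zero, and separately that the columns referenced by the Jacobian computation in Lemma \ref{lem:degen} are the original nonzero ones. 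Everything else — criticality, output preservation, dimension count — is a direct citation of the already-established Theorem \ref{lem:crit-prese}, Proposition \ref{prop:one-step-embed}, and Lemma \ref{lem:degen}.
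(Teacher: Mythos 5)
Your overall route --- iterate the one-step embedding, expand $\prod_i(\fT_{l_i,s_i}+\alpha_i\fV_{l_i,s_i})$ multilinearly in the $\alpha_i$, and observe that the $\fV\cdots\fV$ cross terms vanish --- is the same skeleton as the paper's proof, and the vanishing of those cross terms is indeed the key algebraic fact. However, there is a genuine gap in how you obtain the $K$ independent degenerate directions. You propose to split $m'_l-m_l$ \emph{distinct} neurons in each layer $l$ and to always split a neuron whose output-weight column is an unmodified nonzero column of $\vtheta_A$. The hypothesis only guarantees $\mW^{[l+1]}\neq\mzero$, i.e., \emph{at least one} nonzero output column per layer, while $m'_l-m_l$ can be arbitrarily large (e.g., $m'_l>2m_l$ forces re-splitting even if every original column were nonzero). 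So in general some neuron, or one of its copies, must be split more than once. But under $\fT_{l,s}$ the freshly created copy has output column $\mzero$, and under $\fT^{\alpha}_{l,s}$ with $\alpha\in\{0,1\}$ one of the two copies has output column $\mzero$; since Lemma \ref{lem:degen} requires that column to be nonzero for \emph{every} point of the intermediate manifold $\fM_{j-1}$ (i.e., for all values of the earlier $\alpha$'s in $\sR$), your inductive invocation of Lemma \ref{lem:degen} breaks down exactly at these re-splits. You flag this obstacle yourself, but the proposed workaround does not cover the general case allowed by the theorem.

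The paper resolves this by introducing generalized direction operators $\fV_{l,s,J}$, where $J$ indexes \emph{all} copies of the original neuron $s$ produced by the earlier splits: the relevant output column is then the sum over the copies, which always equals the original nonzero column of $\vtheta_A$ regardless of the earlier $\alpha$'s, and each $\fT^{\alpha}_{l,s,J}=\fT_{l,s}+\alpha\fV_{l,s,J}$ is shown to coincide with an ordinary one-step embedding $\fT^{\alpha'}_{l,s}$ for a rescaled $\alpha'$. Criticality of every point of the affine subspace then follows from $K$ applications of Theorem \ref{lem:crit-prese} rather than from the manifold induction of Lemma \ref{lem:degen}, and the identities $\fV_{l',s',J'}\prod\fT\,\fV_{l,s,J}=\mzero$ deliver the vanishing of the cross terms in full generality. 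To repair your argument you would either have to add the (unstated and insufficient) assumption that each layer has at least $m'_l-m_l$ neurons with nonzero output columns, or reproduce something like the paper's aggregation over copies.
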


Note that  neuron-index permutation among the same layer is a trivial criticality invariant transform. More discussions about it, specifically for NNs of homogeneous activation functions like ReLU, can be found in Section A.1 in Appendix. 




\section{Conclusion and discussion} \label{sec:dis}
In this work, we prove an embedding principle that loss landscape of a DNN \emph{contains} all critical points of all the narrower DNNs. This embedding principle unravels
wide existence of highly degenerate critical points with low complexity in the loss landscape of a wide DNN, i.e., critical points with low-complexity output function and degenerate Hessian matrix, embedded from critical points of narrow DNNs. 
With such a loss landscape of DNN, the gradient-based training has the potential of getting attracted or even converging to a low complexity critical point as confirmed by above numerical experiments, which implies a potential implicit regularization towards low-complexity function of nonlinear DNN training dynamics.

Moreover, through critical embedding, a critical point in form of a common non-degenerate local minimum of a narrow DNN not only becomes degenerate in general, but also may become a saddle point as illustrated by numerical experiments. This may be the reason underlying the general easy optimization of wide DNNs observed in practice even beyond the linear/kernel/NTK regime \citep{chen2020dynamical,trager2019gradient,geiger2020disentangling,fort2020deep,luo2021phase}. We will perform more detailed analysis as well as numerical experiments specifically about this minimum-to-saddle transition later.

At the essence, the embedding principle results from the layer structure of a neural network model, which allows arbitrary neuron addition, input weight copying and output weight splitting within each layer. Therefore, though results in this work assume fully-connected NNs, these can be easily extended to other DNN architectures. Considering convolutional neural networks for example, the quantity that corresponds to width of fully-connected NNs is channel. Similar to one-step or multi-step embedding, we can introduce a feature splitting operation, i.e., increase the number of channels by splitting all neurons sharing one convolution kernel with the same $\alpha$, which can be proven to preserve the output function, representation as well as the criticality. Thereby, embedding principle holds in a sense that loss landscape of any CNN contains all critical points of all narrower CNNs whose number of channels in each layer is no more than that of the target CNN.
Currently, depth serves as a preset hyperparameter in our analysis. Whether loss landscape of DNNs of different depth has certain embedding relation for specific DNN architectures such as ResNet is an interesting open problem.

Our embedding principle and experiments in Figs. \ref{fig:syntraining} and \ref{fig:expand_mnist} suggest that whenever training of a wide DNN is stagnated around a critical point, it potentially is embedded from a much narrower DNN. Therefore, many neurons with similar representation can be reduced to one neuron. How we can design efficient pruning algorithm to fully realize this potential and how it is related to existing pruning methods as well as the well-known lottery ticket hypothesis \citep{frankle2018lottery} are important problems for our future research.

We remark that our embedding principle applies for landscape of general loss functions. Although for loss functions like cross entropy, a meaningful finite critical point may not exist as its parameters diverge in general throughout the training, yet it is reasonable to expect that critical embedding may provide us certain approximate critical points from narrow NNs. Of course, how to properly define an approximate critical point is in itself a problem of interest. And we leave this problem for the future study.

Overall, our embedding principle provides the first clear picture about the general structure of critical points of DNN loss landscape, which is fundamental to the theoretical understanding of both training and generalization behavior of DNNs as well as the design of optimization algorithms. Of course, the study of loss landscape of DNN is far from complete. This work serves as a starting point for a novel line of research, which finally leads to an exact and comprehensive theoretical description about loss landscape of DNNs as well as an understanding of its profound impact on training and generalization.

\acksection{This work is sponsored by the National Key R\&D Program of China  Grant No. 2019YFA0709503 (Z. X.), the Shanghai Sailing Program, the Natural Science Foundation of Shanghai Grant No. 20ZR1429000  (Z. X.), the National Natural Science Foundation of China Grant No. 62002221 (Z. X.), the National Natural Science Foundation of China Grant No. 12101401 (T. L.), the National Natural Science Foundation of China Grant No. 12101402 (Y. Z.), Shanghai Municipal of Science and Technology Project Grant No. 20JC1419500 (Y.Z.), Shanghai Municipal of Science and Technology Major Project No. 2021SHZDZX0102, and the HPC of School of Mathematical Sciences and the Student Innovation Center at Shanghai Jiao Tong University.}

\bibliographystyle{elsarticle-num-names}
\bibliography{references}

\newpage
\appendix
\section{Appendix}

\begin{lemma*}[\textbf{Lemma 1}] 
    Given a $L$-layer ($L\geq 2$) fully-connected neural network with width $(m_0,\ldots,m_{L})$, for any network parameters    $\vtheta=(\mW^{[1]},\vb^{[1]},\cdots,\mW^{[L]},\vb^{[L]})$ and for any $l\in[L-1]$, $s\in[m_l]$, we have the expressions for $\vtheta':=\fT_{l,s}^{\alpha}(\vtheta)$ (see Fig.~\ref{fig:F-Z} for an illustration)\\
    (i) feature vectors in $\vF_{\vtheta'}$: $\vf^{[l']}_{\vtheta'}=\vf^{[l']}_{\vtheta}$, $l'\neq l$ and $\vf^{[l]}_{\vtheta'}=\left[(\vf_{\vtheta}^{[l]})^\T,(\vf_{\vtheta}^{[l]})_s\right]^\T$;
    
    (ii) feature gradients in $\vG_{\vtheta'}$: $\vg^{[l']}_{\vtheta'}=\vg^{[l']}_{\vtheta}$, $l'\neq l$ and $\vg^{[l]}_{\vtheta'}=
    \left[(\vg^{[l]}_{\vtheta})^\T,(\vg^{[l]}_{\vtheta})_s\right]^\T$;
    
    (iii) error vectors in $\vZ_{\vtheta'}$: 
    $\vz^{[l']}_{\vtheta'}=\vz^{[l']}_{\vtheta}$, $l'\neq l$\\
    and $\vz^{[l]}_{\vtheta'}=
    \left[ (\vz_{\vtheta}^{[l]})^\T_{[1:s-1]},(1-\alpha)(\vz_{\vtheta}^{[l]})_s, (\vz_{\vtheta}^{[l]})^\T_{[s+1:m_l]},\alpha (\vz_{\vtheta}^{[l]})_s \right]^\T$.
\end{lemma*}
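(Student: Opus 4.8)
The plan is to unpack the definition of $\fT^{\alpha}_{l,s}=\fT_{l,s}+\alpha\fV_{l,s}$ into explicit statements about the parameters of $\vtheta'$ and then verify (i), (ii), (iii) in that order, using forward induction on the layer index for the feature vectors and feature gradients and backward induction for the error vectors. Writing $\vtheta'=(\mW'^{[1]},\vb'^{[1]},\ldots,\mW'^{[L]},\vb'^{[L]})$, the definition says: $\mW'^{[k]}=\mW^{[k]}$, $\vb'^{[k]}=\vb^{[k]}$ for $k\notin\{l,l+1\}$; $\mW'^{[l]}$ is $\mW^{[l]}$ with an extra last row equal to $\mW^{[l]}_{s,[1:m_{l-1}]}$ and $\vb'^{[l]}$ is $\vb^{[l]}$ with $\vb^{[l]}_s$ appended; and $\mW'^{[l+1]}$ is $\mW^{[l+1]}$ with its $s$-th column rescaled to $(1-\alpha)\mW^{[l+1]}_{[1:m_{l+1}],s}$ and a new last column $\alpha\mW^{[l+1]}_{[1:m_{l+1}],s}$ appended, while $\vb'^{[l+1]}=\vb^{[l+1]}$. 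In words, the added neuron of layer $l$ duplicates the incoming weights and bias of neuron $s$, and neuron $s$ together with the new neuron carry fractions $1-\alpha$ and $\alpha$ of the old outgoing weight of neuron $s$.

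For (i) I would induct forward on $l'$. For $l'<l$ the parameters of layers $1,\ldots,l'$ are unchanged, so the recursion started from $\vf^{[0]}_{\vtheta'}=\vx=\vf^{[0]}_{\vtheta}$ gives $\vf^{[l']}_{\vtheta'}=\vf^{[l']}_{\vtheta}$. At $l'=l$, applying $\vf^{[l]}_{\vtheta'}=\sigma(\mW'^{[l]}\vf^{[l-1]}_{\vtheta'}+\vb'^{[l]})$ with $\vf^{[l-1]}_{\vtheta'}=\vf^{[l-1]}_{\vtheta}$: the first $m_l$ coordinates reproduce $\vf^{[l]}_{\vtheta}$ and the last coordinate is $\sigma(\mW^{[l]}_{s,[1:m_{l-1}]}\vf^{[l-1]}_{\vtheta}+\vb^{[l]}_s)=(\vf^{[l]}_{\vtheta})_s$, which is the claimed form. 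At $l'=l+1$, when computing $\mW'^{[l+1]}\vf^{[l]}_{\vtheta'}$ the $s$-th and $(m_l+1)$-th coordinates of $\vf^{[l]}_{\vtheta'}$, both equal to $(\vf^{[l]}_{\vtheta})_s$, are multiplied by $(1-\alpha)\mW^{[l+1]}_{[1:m_{l+1}],s}$ and $\alpha\mW^{[l+1]}_{[1:m_{l+1}],s}$, whose sum is $\mW^{[l+1]}_{[1:m_{l+1}],s}(\vf^{[l]}_{\vtheta})_s$; hence the layer-$(l+1)$ preactivation equals $\mW^{[l+1]}\vf^{[l]}_{\vtheta}+\vb^{[l+1]}$ and $\vf^{[l+1]}_{\vtheta'}=\vf^{[l+1]}_{\vtheta}$. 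For $l'>l+1$ the parameters are unchanged and the layer-$(l+1)$ feature already matches, so the induction continues; in particular $f_{\vtheta'}=f_{\vtheta}$, which is used in (iii).

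Part (ii) is then immediate: for $l'\neq l$ the layer-$l'$ preactivation coincides for $\vtheta'$ and $\vtheta$, either because parameters and the incoming feature are unchanged or, at $l'=l+1$, because the preactivation was shown unchanged above, so $\vg^{[l']}_{\vtheta'}=\vg^{[l']}_{\vtheta}$ (with $\vg^{[L]}_{\vtheta'}=\mathbf 1=\vg^{[L]}_{\vtheta}$); at $l'=l$ the preactivation is $[(\cdot)^\T,(\cdot)_s]^\T$, so $\vg^{[l]}_{\vtheta'}=[(\vg^{[l]}_{\vtheta})^\T,(\vg^{[l]}_{\vtheta})_s]^\T$. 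For (iii) I induct backward. By (i), $\vz^{[L]}_{\vtheta'}=\nabla\ell(\vf^{[L]}_{\vtheta'},\vy)=\nabla\ell(\vf^{[L]}_{\vtheta},\vy)=\vz^{[L]}_{\vtheta}$. For $l+1\leq l'\leq L-1$, the recursion $\vz^{[l']}_{\vtheta'}=(\mW'^{[l'+1]})^\T\vz^{[l'+1]}_{\vtheta'}\circ\vg^{[l'+1]}_{\vtheta'}$ involves only the unchanged block $\mW^{[l'+1]}$ (since $l'+1>l+1$) and quantities already shown equal, so $\vz^{[l']}_{\vtheta'}=\vz^{[l']}_{\vtheta}$. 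At $l'=l$, rows $s$ and $m_l+1$ of $(\mW'^{[l+1]})^\T$ equal $(1-\alpha)$ and $\alpha$ times row $s$ of $(\mW^{[l+1]})^\T$ and all other rows agree; applying this to $\vz^{[l+1]}_{\vtheta}\circ\vg^{[l+1]}_{\vtheta}$ yields exactly $\vz^{[l]}_{\vtheta'}=[(\vz_{\vtheta}^{[l]})^\T_{[1:s-1]},(1-\alpha)(\vz_{\vtheta}^{[l]})_s,(\vz_{\vtheta}^{[l]})^\T_{[s+1:m_l]},\alpha(\vz_{\vtheta}^{[l]})_s]^\T$. At $l'=l-1$ (when $l\geq 2$), $(\mW'^{[l]})^\T$ is $(\mW^{[l]})^\T$ with a copy of its $s$-th column appended as a new last column; this last column meets the $(m_l+1)$-th entry $\alpha(\vz^{[l]}_{\vtheta})_s(\vg^{[l]}_{\vtheta})_s$ of $\vz^{[l]}_{\vtheta'}\circ\vg^{[l]}_{\vtheta'}$, which together with its $s$-th entry $(1-\alpha)(\vz^{[l]}_{\vtheta})_s(\vg^{[l]}_{\vtheta})_s$ contributes $(\vz^{[l]}_{\vtheta})_s(\vg^{[l]}_{\vtheta})_s$ against column $s$, so $(\mW'^{[l]})^\T(\vz^{[l]}_{\vtheta'}\circ\vg^{[l]}_{\vtheta'})=(\mW^{[l]})^\T(\vz^{[l]}_{\vtheta}\circ\vg^{[l]}_{\vtheta})=\vz^{[l-1]}_{\vtheta}$; for $l'<l-1$ the block $\mW^{[l']}$ is unchanged and the backward induction closes.

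The main obstacle is the error-vector computation at layers $l$ and $l-1$: one must carefully track how the $\alpha,1-\alpha$ rescaling in $\mW'^{[l+1]}$ produces the ``split'' error at layer $l$, and how the duplicated incoming-weight row in $\mW'^{[l]}$ exactly re-merges this split at layer $l-1$, so that the backward pass below layer $l$ is unaffected; the rest is bookkeeping. One also has to respect the order, since part (i) must precede the base case $\vz^{[L]}_{\vtheta'}=\vz^{[L]}_{\vtheta}$ of (iii), and to check the degenerate index cases $l=1$ (there is no layer $l-1$ in the $\vz$-recursion) and $l+1=L$ (no activation at the output and $\vg^{[L]}=\mathbf 1$), which the same computations already cover.
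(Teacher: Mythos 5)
Your proposal is correct and follows essentially the same route as the paper's proof: forward propagation through the duplicated row/appended bias at layer $l$ and the $(1-\alpha),\alpha$ column split at layer $l+1$ for (i)–(ii), then backward induction for (iii), with the key cancellations at layers $l$ and $l-1$ identified exactly as in the paper. Your explicit treatment of the edge cases $l=1$ and $l+1=L$ is a minor addition the paper leaves implicit.
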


\begin{proof}
    (i) By the construction of $\vtheta'$, it is clear that $\vf_{\vtheta'}^{[l']}=\vf_{\vtheta}^{[l']}$ for any $l'\in[l-1]$. Then
    \begin{align}
        \vf^{[l]}_{\vtheta'} 
        &= \sigma\circ \left(\left[ {\begin{array}{cc}
        \mW^{[l]} \\
        \mW^{[l]}_{s,[1:m_{l-1}]} \\
        \end{array} } \right] \vf^{[l-1]}_{\vtheta}+
        \left[ {\begin{array}{cc}
        \vb^{[l]} \\
        \vb^{[l]}_s \\
        \end{array} } \right]
        \right)
        = \left[ {\begin{array}{cc}
        \vf^{[l]}_{\vtheta} \\
        (\vf^{[l]}_{\vtheta})_s \\
        \end{array} } \right].
    \end{align}
    Note that
    \begin{equation*}
        \alpha\left[\mzero_{m_{l+1}\times (s-1)},-\mW^{[l+1]}_{[1:m_{l+1}],s},\mzero_{m_{l+1}\times (m_{l}-s)},\mW^{[l+1]}_{[1:m_{l+1}],s}\right]\left[ {\begin{array}{cc}
        \vf^{[l]}_{\vtheta} \\
        (\vf^{[l]}_{\vtheta})_s \\
        \end{array} } \right]=\mzero_{m_{l+1}\times1}.
    \end{equation*}
    Thus
    \begin{align}
        \vf^{[l+1]}_{\vtheta'} 
        &=\sigma\circ \left(
        \left[\mW^{[m_{l+1}]},\mzero_{m_{l+1}\times1}\right]
        \left[ {\begin{array}{cc}
        \vf^{[l]}_{\vtheta} \\
        (\vf^{[l]}_{\vtheta})_s \\
        \end{array} } \right]+\mzero_{m_{l+1}\times1}+
        \vb^{[l+1]} 
        \right) = \vf_{\vtheta}^{[l+1]}.
    \end{align}
    Next, by the construction of $\vtheta'$ again, it is clear that $\vf_{\vtheta'}^{[l']}=\vf_{\vtheta}^{[l']}$ for any $l'\in[l+1:L]$. 
    
    (ii) The results for feature gradients $\vg_{\vtheta'}^{[l']}$, $l'\in[L]$ can be calculated in a similar way.
    
    (iii) 
    By the backpropagation and the above facts in (i), we have $\vz^{[L]}_{\vtheta'}=\nabla\ell(\vf^{[L]}_{\vtheta'},\vy)=\nabla\ell(\vf^{[L]}_{\vtheta},\vy)=\vz^{[L]}_{\vtheta}$.
    
    Recalling the recurrence relation for $l'\in[l+1:L-1]$, then we recursively obtain the following equality for $l'$ from $L-1$ down to $l+1$:
    \begin{equation}
        \vz^{[l']}_{\vtheta'}
        =(\mW^{[l'+1]})^\T \vz^{[l'+1]}_{\vtheta'}\circ \vg^{[l'+1]}_{\vtheta'}
        =(\mW^{[l'+1]})^\T \vz^{[l'+1]}_{\vtheta}\circ \vg^{[l'+1]}_{\vtheta}
        =\vz_{\vtheta}^{[l']}.
    \end{equation}
    
    Next,
    \begin{align}
        \vz^{[l]}_{\vtheta'}
        &=\left(\left[\mW^{[m_{l+1}]},\mzero_{m_{l+1}\times1}\right]
        +\alpha \left[\mzero_{m_{l+1}\times (s-1)},-\mW^{[l+1]}_{[1:m_{l+1}],s},\mzero_{m_{l+1}\times (m_{l}-s)},\mW^{[l+1]}_{[1:m_{l+1}],s}\right]\right)^\T
        \vz^{[l+1]}_{\vtheta}\circ \vg^{[l+1]}_{\vtheta} \nonumber\\
        &=\left[ {\begin{array}{cc}
        \vz^{[l]}_{\vtheta} \\
        0 \\
        \end{array} } \right]+
        \left[ {\begin{array}{cc}
        \mzero_{m_{l+1}\times(s-1)}\\
        -\alpha(\vz^{[l]}_{\vtheta})_s \\
        \mzero_{m_{l+1}\times(m_l-s)} \\
        \alpha(\vz^{[l]}_{\vtheta})_s
        \end{array} } \right]\nonumber\\
        &=\left[ (\vz_{\vtheta}^{[l]})^\T_{[1:s-1]},(1-\alpha)(\vz_{\vtheta}^{[l]})_s, (\vz_{\vtheta}^{[l]})^\T_{[s+1:m_l]},\alpha (\vz_{\vtheta}^{[l]})_s \right]^\T.
    \end{align}
    Finally, 
    \begin{align}
        \vz^{[l-1]}_{\vtheta'}
        &=\left[(\mW^{[l]})^\T,(\mW^{[l]})_{s,[1:m_{l-1}]}^\T\right] 
        \left(\left[ {\begin{array}{cc}
        \vz^{[l]}_{\vtheta} \\
        0 \\
        \end{array} } \right]+
        \left[ {\begin{array}{cc}
        \mzero_{m_{l+1}\times(s-1)}\\
        -\alpha(\vz^{[l]}_{\vtheta})_s \\
        \mzero_{m_{l+1}\times(m_l-s)} \\
        \alpha(\vz^{[l]}_{\vtheta})_s
        \end{array} } \right]\right)
        \circ \left[ {\begin{array}{cc}
        \vg^{[l]}_{\vtheta} \\
        (\vg^{[l]}_{\vtheta})_s \\
        \end{array} } \right]\nonumber\\
        &=(\mW^{[l]})^\T\vz^{[l]}_{\vtheta}\circ\vg^{[l]}_{\vtheta}+\mzero_{m_{l-1}\times 1}\nonumber\\
        &=\vz_{\vtheta}^{[l-1]}.
    \end{align}
    This with the recurrence relation again leads to $\vz_{\vtheta'}^{[l']}=\vz_{\vtheta}^{[l']}$ for all $l'\in[1:l-1]$.
\end{proof}

\begin{prop*}[\textbf{Proposition 1: one-step embedding preserves network properties}] 
    Given a $L$-layer ($L\geq 2$) fully-connected neural network with width $(m_0,\ldots,m_{L})$, for any network parameters $\vtheta=(\mW^{[1]},\vb^{[1]},\cdots,\mW^{[L]},\vb^{[L]})$ and for any $l\in[L-1]$, $s\in[m_l]$, the following network properties are preserved for $\vtheta'=\fT_{l,s}^{\alpha}(\vtheta)$:
    
    (i) output function is preserved: $f_{\vtheta'}(\vx)=f_{\vtheta}(\vx)$ for all $\vx$;
    
    (ii) empirical risk is preserved: $\RS(\vtheta')=\RS(\vtheta)$;
    
    (iii) the sets of features are preserved:  $\left\{\left(\vf^{[l]}_{\vtheta'}\right)_i\right\}_{i\in[m_{l}+1]}=\left\{\left(\vf^{[l]}_{\vtheta}\right)_i\right\}_{i\in[m_{l}]}$ and\\
    $\left\{\left(\vf^{[l']}_{\vtheta'}\right)_i\right\}_{i\in[m_{l'}]}=\left\{\left(\vf^{[l']}_{\vtheta}\right)_i\right\}_{i\in[m_{l'}]}$ for $l'\in[L]\backslash\{l\}$;
    
\end{prop*}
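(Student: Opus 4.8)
The plan is to derive all three assertions directly from Lemma~\ref{lem:repinv}, which already records exactly how each feature vector transforms under the one-step embedding $\fT_{l,s}^{\alpha}$; no fresh computation with the operators $\fT_{l,s}$ and $\fV_{l,s}$ should be required. First I would handle (i): since $l\in[L-1]$ we have $L\neq l$, so Lemma~\ref{lem:repinv}(i) with $l'=L$ gives $\vf^{[L]}_{\vtheta'}=\vf^{[L]}_{\vtheta}$, and because $f_{\vtheta}(\vx)=\vf^{[L]}_{\vtheta}(\vx)$ by definition, this is precisely $f_{\vtheta'}(\vx)=f_{\vtheta}(\vx)$.

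Part (ii) is then immediate from (i): by definition $\RS(\vtheta')=\Exp_S\ell(\vf_{\vtheta'}(\vx),\vy)$, and substituting the pointwise identity $\vf_{\vtheta'}(\vx)=\vf_{\vtheta}(\vx)$ into the expectation yields $\RS(\vtheta')=\Exp_S\ell(\vf_{\vtheta}(\vx),\vy)=\RS(\vtheta)$. For part (iii), for each $l'\in[L]\setminus\{l\}$ Lemma~\ref{lem:repinv}(i) gives the vector equality $\vf^{[l']}_{\vtheta'}=\vf^{[l']}_{\vtheta}$ (both of length $m_{l'}$), so the two sets of component response functions coincide entrywise. For $l'=l$, Lemma~\ref{lem:repinv}(i) gives $\vf^{[l]}_{\vtheta'}=\left[(\vf^{[l]}_{\vtheta})^\T,(\vf^{[l]}_{\vtheta})_s\right]^\T$, whose first $m_l$ entries are exactly the entries of $\vf^{[l]}_{\vtheta}$ and whose last entry duplicates the $s$-th one; hence $\left\{(\vf^{[l]}_{\vtheta'})_i\right\}_{i\in[m_l+1]}=\left\{(\vf^{[l]}_{\vtheta})_i\right\}_{i\in[m_l]}\cup\{(\vf^{[l]}_{\vtheta})_s\}=\left\{(\vf^{[l]}_{\vtheta})_i\right\}_{i\in[m_l]}$, the last equality holding because the duplicated entry already lies in the set.

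There is no genuine obstacle here, since everything reduces to reading off Lemma~\ref{lem:repinv}; the only point worth stating carefully is a matter of scope rather than difficulty. Lemma~\ref{lem:repinv} is phrased in terms of the feature collections $\vF_{\vtheta}$, but its identities are proved by the layerwise recursion $\vf^{[l']}_{\vtheta}(\vx)=\sigma(\mW^{[l']}\vf^{[l'-1]}_{\vtheta}(\vx)+\vb^{[l']})$, which holds for an arbitrary input $\vx$; consequently the equality in (i) is a function-level identity valid for all $\vx$, not merely on the training set $S$. I would make this explicit at the start so that (i) and (ii) are unambiguous.
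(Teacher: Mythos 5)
Your proposal is correct and follows exactly the paper's route: the paper's own proof simply declares (i)--(iii) to be direct consequences of Lemma~\ref{lem:repinv}, and you spell out those consequences (including the set-level collapse of the duplicated $s$-th feature for part (iii)) in the intended way. Your closing remark that the lemma's identities hold pointwise in $\vx$, not just on $S$, is a worthwhile clarification but does not constitute a different argument.
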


\begin{proof}
    The properties (i)--(iii) are direct consequences of Lemma 1. 
\end{proof}

\begin{theorem*}[\textbf{Theorem 1: criticality preserving}]
    Given a $L$-layer ($L\geq 2$) fully-connected neural network with width $(m_0,\ldots,m_{L})$, for any network parameters    $\vtheta=(\mW^{[1]},\vb^{[1]},\cdots,\mW^{[L]},\vb^{[L]})$ and for any $l\in[L-1]$, $s\in[m_l]$, if $\nabla_{\vtheta}\RS(\vtheta)=\mzero$, then $\nabla_{\vtheta}\RS(\vtheta')=\mzero$.
\end{theorem*}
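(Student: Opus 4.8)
The plan is to verify $\nabla_{\vtheta'}\RS(\vtheta')=\mzero$ blockwise, layer by layer, by feeding the explicit identities of Lemma~\ref{lem:repinv} (which relate $\vF_{\vtheta'},\vG_{\vtheta'},\vZ_{\vtheta'}$ to $\vF_{\vtheta},\vG_{\vtheta},\vZ_{\vtheta}$) into the backpropagation formulas for the parameter gradients. The guiding observation is that, at any parameter vector, the gradient blocks $\nabla_{\mW^{[l']}}\RS=\Exp_S\,\vz^{[l']}\circ\vg^{[l']}(\vf^{[l'-1]})^{\T}$ and $\nabla_{\vb^{[l']}}\RS=\Exp_S\,\vz^{[l']}\circ\vg^{[l']}$ depend only on the feature vector entering layer $l'$, the feature gradient at layer $l'$, and the error vector at layer $l'$, so it suffices to track how these three objects transform under $\fT_{l,s}^{\alpha}$. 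The hypothesis $\nabla_{\vtheta}\RS(\vtheta)=\mzero$ is exactly the statement that $\Exp_S\,\vz^{[l']}_{\vtheta}\circ\vg^{[l']}_{\vtheta}(\vf^{[l'-1]}_{\vtheta})^{\T}=\mzero$ and $\Exp_S\,\vz^{[l']}_{\vtheta}\circ\vg^{[l']}_{\vtheta}=\mzero$ for every $l'\in[L]$, and the recurring point is that every entry of $\nabla_{\vtheta'}\RS(\vtheta')$ will turn out to be a fixed, data-independent scalar (one of $1$, $\alpha$, $1-\alpha$) times an entry of $\nabla_{\vtheta}\RS(\vtheta)$, so that $\Exp_S$ commutes with the rearrangement and the whole gradient vanishes.

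Concretely I would split into three cases. For $l'\notin\{l,l+1\}$, Lemma~\ref{lem:repinv} gives $\vz^{[l']}_{\vtheta'}=\vz^{[l']}_{\vtheta}$, $\vg^{[l']}_{\vtheta'}=\vg^{[l']}_{\vtheta}$, and (since then $l'-1\neq l$) also $\vf^{[l'-1]}_{\vtheta'}=\vf^{[l'-1]}_{\vtheta}$, so $\nabla_{\vtheta'|_{l'}}\RS(\vtheta')=\nabla_{\vtheta|_{l'}}\RS(\vtheta)=\mzero$ verbatim. For $l'=l+1$ we still have $\vz^{[l+1]}_{\vtheta'}=\vz^{[l+1]}_{\vtheta}$ and $\vg^{[l+1]}_{\vtheta'}=\vg^{[l+1]}_{\vtheta}$, while the incoming feature is the augmented vector $\vf^{[l]}_{\vtheta'}=[(\vf^{[l]}_{\vtheta})^{\T},(\vf^{[l]}_{\vtheta})_s]^{\T}$; hence $\nabla_{\mW^{[l+1]}}\RS(\vtheta')$ is $\nabla_{\mW^{[l+1]}}\RS(\vtheta)$ with one extra column appended, and that column is precisely the $s$-th column of $\nabla_{\mW^{[l+1]}}\RS(\vtheta)$, so it is zero; the bias block $\nabla_{\vb^{[l+1]}}\RS(\vtheta')$ is unchanged and hence zero.

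The only genuinely computational case is $l'=l$. Here $\vf^{[l-1]}_{\vtheta'}=\vf^{[l-1]}_{\vtheta}$, while by Lemma~\ref{lem:repinv} the Hadamard product $\vz^{[l]}_{\vtheta'}\circ\vg^{[l]}_{\vtheta'}$ has, for indices $i\neq s$ in $[m_l]$, entry $(\vz^{[l]}_{\vtheta})_i(\vg^{[l]}_{\vtheta})_i$, at index $s$ entry $(1-\alpha)(\vz^{[l]}_{\vtheta})_s(\vg^{[l]}_{\vtheta})_s$, and at the new index $m_l+1$ entry $\alpha(\vz^{[l]}_{\vtheta})_s(\vg^{[l]}_{\vtheta})_s$. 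Therefore in $\nabla_{\mW^{[l]}}\RS(\vtheta')$ the rows indexed by $i\neq s$ equal the corresponding rows of $\nabla_{\mW^{[l]}}\RS(\vtheta)$, the row indexed by $s$ equals $(1-\alpha)$ times the $s$-th row of $\nabla_{\mW^{[l]}}\RS(\vtheta)$, and the new $(m_l+1)$-th row equals $\alpha$ times that same $s$-th row; all are zero because $\vtheta$ is critical, and the identical bookkeeping applied to $\nabla_{\vb^{[l]}}\RS(\vtheta')$ gives zero. Collecting the three cases yields $\nabla_{\vtheta'}\RS(\vtheta')=\mzero$.

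I expect the main obstacle to be purely notational rather than conceptual: one must be careful to (i) apply the single-sample backpropagation identities first and only then take $\Exp_S$, justifying that the data-independent coefficients $\alpha$ and $1-\alpha$ pull out of the average; and (ii) line up the row/column index shift $s\mapsto m_l+1$ created by the neuron split with the matching shift in the feature and error vectors supplied by Lemma~\ref{lem:repinv}. Note that, unlike the degeneracy statements, this argument needs no nonvanishing condition on $\mW^{[l+1]}_{[1:m_{l+1}],s}$.
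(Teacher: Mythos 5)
Your proposal is correct and follows essentially the same route as the paper's proof: both feed the identities of Lemma~\ref{lem:repinv} into the backpropagation formulas for the per-layer gradient blocks, split into the cases $l'\notin\{l,l+1\}$, $l'=l+1$, $l'=l$, and observe that each entry of $\nabla\RS(\vtheta')$ is a data-independent scalar ($1$, $\alpha$, or $1-\alpha$) times a vanishing entry of $\nabla\RS(\vtheta)$. No gaps; your closing remark that no nonvanishing condition on $\mW^{[l+1]}_{[1:m_{l+1}],s}$ is needed here is also consistent with the paper.
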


\begin{proof}
    Gradient of loss with respect to network parameters of each layer can be computed from $\vF$, $\vG$, and $\vZ$ as follows
    \begin{align*}
        \nabla_{\mW^{[l']}}R_S(\vtheta)
        &= \nabla_{\mW^{[l']}}\Exp_S \ell(\vf_{\vtheta}(\vx),\vy)=\Exp_S\left(\vz_{\vtheta}^{[l']}\circ \vg^{[l']}_{\vtheta}(\vf_{\vtheta}^{[l'-1]})^\T\right),\\
        \nabla_{\vb^{[l']}}R_S(\vtheta)
        &= \nabla_{\vb^{[l]}}\Exp_S \ell(\vf_{\vtheta}(\vx),\vy)=\Exp_S(\vz_{\vtheta}^{[l']}\circ \vg^{[l']}_{\vtheta}).
    \end{align*}
    
    Then, by Lemma 1, we have $\nabla_{\mW^{[l']}}R_S(\vtheta')=
    \nabla_{\mW^{[l']}}R_S(\vtheta)=\mzero$ for $l'\neq l, l+1$ and $\nabla_{\vb^{[l']}}R_S(\vtheta')=
    \nabla_{\vb^{[l']}}R_S(\vtheta)=\mzero$ for $l'\neq l$. Also, for any $j\in[m_{l+1}]$,$k\in[m_{l}]$, since $(\vz_{\vtheta'}^{[l+1]})_j=(\vz_{\vtheta}^{[l+1]})_j$, 
    $(\vg^{[l+1]}_{\vtheta'})_j=(\vg^{[l+1]}_{\vtheta})_j$, and 
    $(\vf^{[l]}_{\vtheta'})_k=(\vf^{[l]}_{\vtheta})_k$, $(\vf^{[l]}_{\vtheta'})_{m_l+1}=(\vf^{[l]}_{\vtheta})_s$, we obtain
    \begin{align*}
        \nabla_{\mW^{[l+1]}_{j,k}}R_S(\vtheta')
        &= \nabla_{\mW^{[l+1]}_{j,k}}R_S(\vtheta)=0,\\
        \nabla_{\mW^{[l+1]}_{j,m_l+1}}R_S(\vtheta')
        &= \nabla_{\mW^{[l+1]}_{j,s}}R_S(\vtheta)=0.
    \end{align*}
    Similarly, for any $j\in[m_{l}]\backslash \{s\}$,$k\in[m_{l-1}]$, we have
    \begin{align*}
        \nabla_{\mW^{[l]}_{j,k}}R_S(\vtheta')
        &= \nabla_{\mW^{[l]}_{j,k}}R_S(\vtheta)=0,\\
        \nabla_{\vb^{[l]}_{j}}R_S(\vtheta')
        &= \nabla_{\vb^{[l]}_{j}}R_S(\vtheta)=0,\\
        \nabla_{\mW^{[l]}_{s,k}}R_S(\vtheta')
        &= (1-\alpha)\nabla_{\mW^{[l]}_{s,k}}R_S(\vtheta)=0,\\
        \nabla_{\mW^{[l]}_{m_l+1,k}}R_S(\vtheta')
        &= \alpha\nabla_{\mW^{[l]}_{s,k}}R_S(\vtheta)=0,\\
        \nabla_{\vb^{[l]}_{s}}R_S(\vtheta')
        &= (1-\alpha)\nabla_{\vb^{[l]}_{s}}R_S(\vtheta)=0,\\
        \nabla_{\vb^{[l]}_{m_l+1}}R_S(\vtheta')
        &= \alpha\nabla_{\vb^{[l]}_{s}}R_S(\vtheta)=0.
    \end{align*}
    Collecting all the above equalities, we have $\nabla_{\vtheta}\RS(\vtheta')=\mzero$.
\end{proof}



\begin{lemma*}[\textbf{Lemma 2: increment of the degree of degeneracy}] 
    Given a $L$-layer ($L\geq 2$) fully-connected neural network with width $(m_0,\ldots,m_{L})$, if there exists $l\in[L-1]$, $s\in[m_l]$, and a $q$-dimensional differential manifold $\fM$ consisting of critical points of $\RS$ such that for any $\vtheta\in \fM$, $\mW^{[l+1]}_{[1:m_{l+1}],s}\neq \mzero$, then $\fM':=\{\fT_{l,s}^{\alpha}(\vtheta)|\vtheta\in\fM,\alpha\in \sR\}$ is a $(d+1)$-dimensional differential manifold consists of critical points for the corresponding $L$-layer fully-connected neural network with width $(m_0,\ldots,m_{l-1},m_l+1,m_{l+1},\ldots,m_{L})$.
\end{lemma*}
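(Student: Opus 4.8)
The plan is to realize $\fM'$ as the image of one explicit smooth map and then read off its dimension and criticality from results already proved. Define
\[
\Phi\colon\fM\times\sR\to\sR^{M'},\qquad \Phi(\vtheta,\alpha)=\fT^{\alpha}_{l,s}(\vtheta)=\fT_{l,s}(\vtheta)+\alpha\,\fV_{l,s}(\vtheta),
\]
where $\sR^{M'}$ is the parameter space of the width-$(m_0,\ldots,m_{l-1},m_l+1,m_{l+1},\ldots,m_L)$ network. Since $\fT_{l,s}$ and $\fV_{l,s}$ are linear in $\vtheta$, $\Phi$ is a degree-two polynomial map in $(\vtheta,\alpha)$, hence $C^{\infty}$, and its restriction to the product manifold $\fM\times\sR$ (of dimension $q+1$) is smooth. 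Applying Theorem~1 at every $\vtheta\in\fM$ and every $\alpha\in\sR$ shows each $\Phi(\vtheta,\alpha)$ is a critical point of the wider network's empirical risk, while Proposition~1(ii) gives $\RS(\Phi(\vtheta,\alpha))=\RS(\vtheta)$; hence $\fM'=\Phi(\fM\times\sR)$ consists of critical points all sharing the loss value of $\fM$. It then suffices to show $\Phi$ is an embedding onto its image, so that $\fM'\cong\fM\times\sR$ is a $(q+1)$-dimensional differential manifold.

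Next I would prove $\Phi$ is injective with injective differential, and this is exactly where the hypothesis $\mW^{[l+1]}_{[1:m_{l+1}],s}\neq\mzero$ is used. Writing $\vtheta'=\Phi(\vtheta,\alpha)$ and reading off the definitions of $\fT_{l,s},\fV_{l,s}$: for $k\neq l,l+1$ the layer-$k$ block of $\vtheta'$ equals $\vtheta|_k$; the first $m_l$ rows of the layer-$l$ weight of $\vtheta'$ (and the first $m_l$ entries of its bias) give $\mW^{[l]}$ (and $\vb^{[l]}$); and in the layer-$(l+1)$ weight of $\vtheta'$ the columns indexed by $[m_l]\backslash\{s\}$ are the corresponding columns of $\mW^{[l+1]}$, column $s$ is $(1-\alpha)\mW^{[l+1]}_{[1:m_{l+1}],s}$, column $m_l+1$ is $\alpha\,\mW^{[l+1]}_{[1:m_{l+1}],s}$, and the bias is $\vb^{[l+1]}$. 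Adding columns $s$ and $m_l+1$ recovers $\mW^{[l+1]}_{[1:m_{l+1}],s}$, hence all of $\mW^{[l+1]}$, and picking any coordinate $j_0$ with $(\mW^{[l+1]})_{j_0,s}\neq0$ recovers $\alpha$ as the ratio of the $(j_0,m_l+1)$ entry of the layer-$(l+1)$ weight of $\vtheta'$ to $(\mW^{[l+1]})_{j_0,s}$; thus $(\vtheta,\alpha)$ is uniquely determined by $\vtheta'$. Differentiating this same bookkeeping: if $\D\Phi_{(\vtheta,\alpha)}(\delta\vtheta,\delta\alpha)=\mzero$ with $\delta\vtheta\in T_{\vtheta}\fM$, then $\delta\vtheta|_k=\mzero$ for all $k\neq l+1$, adding the differentials of columns $s$ and $m_l+1$ of the layer-$(l+1)$ weight gives $\delta\mW^{[l+1]}=\mzero$ (so also $\delta\vb^{[l+1]}=\mzero$), and the resulting column-$(m_l+1)$ equation reads $\delta\alpha\,\mW^{[l+1]}_{[1:m_{l+1}],s}=\mzero$, forcing $\delta\alpha=0$. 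Hence $\Phi$ is an injective immersion of $\fM\times\sR$.

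Finally I would upgrade ``injective immersion'' to ``embedding'' by exhibiting a smooth local left inverse. Every recovery formula above is rational in the entries of $\vtheta'$, its only denominators being coordinates of the vector obtained by summing columns $s$ and $m_l+1$ of the layer-$(l+1)$ weight of $\vtheta'$ — a vector that equals $\mW^{[l+1]}_{[1:m_{l+1}],s}$ on $\fM'$ and is therefore nonzero there. Near any point of $\fM'$ a fixed coordinate of this vector stays nonzero, so the recovery formulas define a $C^{\infty}$ map on an ambient neighbourhood that is a left inverse of $\Phi$; consequently $\Phi$ is a homeomorphism onto $\fM'$ with the subspace topology, and $\fM'$ is an embedded $(q+1)$-dimensional submanifold of $\sR^{M'}$ consisting of critical points, as claimed. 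The one genuinely delicate point is this last step — verifying the recovery map is smooth on an \emph{open} subset of the ambient $\sR^{M'}$ rather than merely along $\fM'$, so that it certifies $\Phi$ as an embedding and not just an immersion; the assumption $\mW^{[l+1]}_{[1:m_{l+1}],s}\neq\mzero$ is precisely what secures this, just as it drives the injectivity of $\Phi$ and of $\D\Phi$.
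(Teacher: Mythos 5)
Your proof is correct, and its core coincides with the paper's: both arguments come down to showing that the parametrization $(\vtheta,\alpha)\mapsto\fT_{l,s}^{\alpha}(\vtheta)$ of $\fM'$ by $\fM\times\sR$ has injective differential, with criticality of every image point supplied by Theorem~1. The paper does this by pushing forward a basis $\{\ve_i(\vtheta)\}$ of $T_{\vtheta}\fM$ and observing that the $\fT_{l,s}$-images have a zero $(m_l+1)$-th column in layer $l+1$ while $\fV_{l,s}(\vtheta)$ does not (which is where $\mW^{[l+1]}_{[1:m_{l+1}],s}\neq\mzero$ enters); your column-sum bookkeeping is the same mechanism, and is in fact slightly more careful, since for $\alpha\neq 0$ the pushforward of $\ve_i$ is $\fT_{l,s}(\ve_i)+\alpha\fV_{l,s}(\ve_i)$ rather than $\fT_{l,s}(\ve_i)$ alone, a point the paper glosses over but your computation handles by summing columns $s$ and $m_l+1$ to annihilate the $\fV$-contribution. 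Where you go beyond the paper is in upgrading the immersion to an actual embedded submanifold: you prove global injectivity of $\Phi$ and construct a smooth left inverse on an ambient open neighbourhood (valid because the recovered column equals $\mW^{[l+1]}_{[1:m_{l+1}],s}\neq\mzero$ on $\fM'$), so that $\fM'$ carries the subspace topology and is genuinely diffeomorphic to $\fM\times\sR$. The paper's proof stops at linear independence of the tangent vectors and thus, read literally, only certifies an immersed $(q+1)$-dimensional manifold. Your extra steps buy the stronger and cleaner conclusion at essentially no cost; the only caveat is that your final homeomorphism argument implicitly assumes $\fM$ itself is embedded in $\sR^{M}$ (so that its manifold topology agrees with the subspace topology), an assumption the lemma statement leaves ambiguous and the paper's proof does not need to confront.
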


\begin{proof}
    For any $\vtheta\in \fM$, let $\{\ve_i(\vtheta)\}_{i=1}^d$ be a basis of its tangent space $T_{\vtheta}\fM$. Then for any $\alpha\in \sR$, the tangent space of $\vtheta'=\fT_{l,s}^{\alpha}(\vtheta)\in \fM'$ is spanned by $\{\fT_{l,s}(\ve_1(\vtheta)),\cdots,\fT_{l,s}(\ve_d(\vtheta)),\fV_{l,s}(\vtheta)\}$. Since $\fT_{l,s}$ is linear and injective, $\{\fT_{l,s}(\ve_i(\vtheta))\}_{i=1}^{d}$ is also a linearly independent set. Moreover, since $\mW^{[l+1]}_{[1:m_{l+1}],m_l+1}=\mzero$ for any vector in parameter space applied with $\fT_{l,s}$, then,  $\{\fT_{l,s}(\ve_1(\vtheta)),\cdots,\fT_{l,s}(\ve_d(\vtheta)),\fV_{l,s}(\vtheta)\}$ are independent if and only if $\fV_{l,s}(\vtheta)\neq \mzero$, i.e., $\mW^{[l+1]}_{[1:m_{l+1}],s}\neq \mzero$.
\end{proof}

\begin{rmk}
    The requirement that $\fM$ is a $q$-dimensional differential manifold can be relaxed to that $\fM$ is a $q$-dimensional topological manifold. In the latter case, $\fM'$ is a $(d+1)$-dimensional topological manifold.
\end{rmk}


\begin{theorem*}[\textbf{Theorem 2: degeneracy of embedded critical points}] 
    Consider two $L$-layer ($L\geq 2$) fully-connected neural networks $\mathrm{NN}_A(\{m_l\}_{l=0}^{L})$ and $\mathrm{NN}_B(\{m'_l\}_{l=0}^{L})$ which is $K$-neuron wider than $\mathrm{NN}_A$. Suppose that the critical point $\vtheta_A=(\mW^{[1]},\vb^{[1]},\cdots,\mW^{[L]},\vb^{[L]})$ satisfy $\mW^{[l]}\neq \mzero$ for each layer $l\in[L]$. Then the parameters $\vtheta_A$ of $\mathrm{NN}_A$ can be critically embedded to a $K$-dimensional critical affine subspace $\fM_B=\{\vtheta_B+\sum_{i=1}^{K}\alpha_i \vv_i|\alpha_i\in \sR \}$ of loss landscape of $\mathrm{NN}_B$. Here $\vtheta_B=(\prod_{i=1}^{K} \fT_{l_i,s_i})(\vtheta_A)$ and $\vv_i=\fT_{l_K,s_K}\cdots \fV_{l_i,s_i} \cdots\fT_{l_1,s_1}\vtheta_A$. 
\end{theorem*}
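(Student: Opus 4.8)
The plan is to realize $\mathrm{NN}_B$ as the end of a chain of $K$ one-step widenings of $\mathrm{NN}_A$, to propagate criticality and dimension along this chain using Theorem~\ref{lem:crit-prese} and Lemma~\ref{lem:degen}, and then to expand the composed embedding operators to read off the explicit affine parametrization. Concretely, for each hidden layer $l\in[L-1]$ let $r_l:=m'_l-m_l\ge 0$ be the number of neurons to be inserted, with $\sum_{l}r_l=K$, and fix an ordering of the $K$ insertions as a list of pairs $(l_1,s_1),\dots,(l_K,s_K)$, where at step $i$ we split neuron $s_i$ of layer $l_i$ in the network produced by steps $1,\dots,i-1$. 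The one structural requirement is that each split act on a neuron whose output-weight column is nonzero along the whole intermediate critical set; since $\mW^{[l]}\neq\mzero$ for every $l$ and, by Lemma~\ref{lem:repinv}, a one-step embedding only alters the split column and the newly created (zero-weight) column of the next layer, I would choose the $s_i$ so that no neuron touched by an earlier split in the same layer is split again, which keeps the relevant column equal to a fixed nonzero column of $\mW^{[l_i+1]}$ throughout.

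With the chain fixed, applying Theorem~\ref{lem:crit-prese} inductively shows that $\fT^{\alpha_K}_{l_K,s_K}\circ\cdots\circ\fT^{\alpha_1}_{l_1,s_1}(\vtheta_A)$ is a critical point of $\RS$ on $\mathrm{NN}_B$ for every $(\alpha_1,\dots,\alpha_K)\in\sR^K$. Starting from the $0$-dimensional critical set $\{\vtheta_A\}$ and applying Lemma~\ref{lem:degen} once per step (using the nonzero-column property just secured), the set $\fM_B:=\{\fT^{\alpha_K}_{l_K,s_K}\circ\cdots\circ\fT^{\alpha_1}_{l_1,s_1}(\vtheta_A):\alpha_i\in\sR\}$ is a $K$-dimensional critical manifold.

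It remains to identify $\fM_B$ with the claimed affine subspace. Since $\fT^{\alpha_i}_{l_i,s_i}=\fT_{l_i,s_i}+\alpha_i\fV_{l_i,s_i}$ with both summands linear, expanding the composition gives a sum over subsets $T\subseteq[K]$: the empty set contributes $\vtheta_B=(\prod_{i=1}^{K}\fT_{l_i,s_i})(\vtheta_A)$, each singleton $\{i\}$ contributes $\alpha_i\vv_i$ with $\vv_i$ exactly as in the statement, and I must show that every term with $|T|\ge 2$ vanishes on $\vtheta_A$. For this I would exploit the support of $\mathrm{im}\,\fV_{l,s}$: it consists of parameters that are zero except for the layer-$(l+1)$ weight matrix, whose only nonzero columns are $-\mW^{[l+1]}_{[1:m_{l+1}],s}$ and $+\mW^{[l+1]}_{[1:m_{l+1}],s}$. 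One checks that every $\fT_{l',s'}$ maps this subspace into itself (it either leaves layer $l+1$ untouched, appends a zero column, or duplicates a row) and that $\fV_{l',s'}$ annihilates it unless $l'=l$ and $s'$ indexes one of those two columns, which is precisely the situation excluded by the no-re-split choice of the $(l_i,s_i)$. Hence all terms with $|T|\ge 2$ are zero, so $\fM_B=\{\vtheta_B+\sum_{i=1}^{K}\alpha_i\vv_i:\alpha_i\in\sR\}$, and the $\vv_i$ are automatically linearly independent since $\fM_B$ has dimension $K$.

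The main obstacle is this last step and the bookkeeping it rests on: tracking the support of the iterated $\fT$-images of $\mathrm{im}\,\fV$ precisely enough to get the second-$\fV$ annihilation, and, in tandem, arranging the splitting sequence so that the nonzero-output-weight hypothesis of Lemma~\ref{lem:degen} holds on every intermediate $\fM_i$ while the construction stays affine --- iterating two-way splits on the same neuron would manufacture $\alpha_i\alpha_j$ cross terms and destroy both the affineness and the linear independence. Everything else reduces to a routine induction on top of the Proposition and Theorem already in hand.
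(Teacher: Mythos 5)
Your overall architecture (chain of one-step embeddings, inductive use of Theorem~\ref{lem:crit-prese} and Lemma~\ref{lem:degen}, then a binomial expansion of $\prod_i(\fT_{l_i,s_i}+\alpha_i\fV_{l_i,s_i})$ with vanishing higher-order terms) matches the paper's, and your support argument for why a cross term $\fV_{l_j,s_j}\cdots\fV_{l_i,s_i}\cdots\vtheta_A$ vanishes when $s_j$ does not index the split column or its copy is sound. The genuine gap is your standing assumption that the $K$ insertions can always be arranged so that ``no neuron touched by an earlier split in the same layer is split again.'' This is not implied by the hypotheses. The theorem only assumes $\mW^{[l+1]}\neq\mzero$, which guarantees a single non-silent neuron in layer $l$; and even if all $m_l$ columns were nonzero, you would be forced to re-split whenever $m'_l>2m_l$. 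The paradigmatic instance of the theorem --- embedding a width-$1$ hidden layer into a width-$500$ one, which is exactly the experiment motivating the paper --- requires splitting the same neuron group $499$ times, so your argument does not cover the general statement. Splitting the freshly created copies instead does not rescue the construction: under the plain composition $\prod_i\fT_{l_i,s_i}$ those copies carry zero output weights, so the corresponding $\fV$ direction degenerates to $\vzero$ and you lose both the hypothesis of Lemma~\ref{lem:degen} and the dimension count.

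The paper closes precisely this case with extra machinery you would need to reproduce. It introduces group-splitting operators $\fV_{l,s,J}$, where $J$ collects the original neuron $s_i$ together with all copies of it created by earlier steps, shows that $\fT_{l,s}+\alpha\fV_{l,s,J}$ is still equivalent to a one-step critical embedding with a rescaled $\alpha$ (because all members of $J$ share input weights and have proportional output columns), and proves the nilpotency identities $\fV_{l',s',J'}\fV_{l,s,J}=\mzero$ and $\fV_{l',s',J'}\prod_i\fT_{l'_i,s'_i}\fV_{l,s,J}=\mzero$ for the admissible index sets. Expanding $\prod_{i=1}^{K}\fT^{\alpha_i}_{l_i,s_i,J_i}$ then kills all terms of order $\geq 2$ even when the same neuron is split repeatedly, and the first-order terms reduce to your $\vv_i$ because the extra columns indexed by $J_i\setminus\{s_i\}$ are zero at the point of application. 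In short: your proof is correct on the subcase where re-splitting can be avoided, but the heart of the theorem --- that repeated splits of one neuron still produce a genuinely affine, $K$-dimensional critical subspace rather than a set contaminated by $\alpha_i\alpha_j$ cross terms --- is exactly the part you set aside as ``to be avoided,'' and it cannot be avoided under the stated hypotheses.
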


\begin{proof}
    The assumption $\mW^{[l]}\neq \mzero$, $l\in[L]$ implies the existence of non-silent neurons, i.e., existing $s\in[m_l]$ such that $\mW^{[l+1]}_{[1:m_{l+1}],s}\neq \mzero$, for any $l\in[L-1]$ with $m'_l>m_l$.
    
    In this proof, we misuse notation and denote $m_l=m_l(\vtheta)$ for the width of the $l$-th layer for any fully-connected neural network with parameters $\vtheta$. For such a general network with parameters $\vtheta$, we introduce the following operator. Given an index set $J$ and for any $l\in[L]$, $s\in[m_l]$, we define 
    \begin{equation*}
        \fV_{l,s, J}(\vtheta)=\left(\mzero_{m_0\times m_1},\cdots,
        \left[\mzero_{m_{l+1}\times (s-1)},-\sum_{j\in  J}\mW^{[l+1]}_{[1:m_{l+1}],j},\mzero_{m_{l+1}\times (m_{l}-s)},\sum_{j\in  J}\mW^{[l+1]}_{[1:m_{l+1}],j}\right],
        \cdots\right).
    \end{equation*}
    Clearly, $\fV_{l,s, J}=\sum_{j\in J}\fV_{l,s,\{j\}}$. If for all $j\in J$, $\mW^{[l]}_{j,[1:m_{l-1}]}=\mW^{[l]}_{s,[1:m_{l-1}]}$, $\mW^{[l+1]}_{[1:m_{l+1}],j}=\beta_j\sum_{j'\in J}\mW^{[l+1]}_{[1:m_{l+1}],j'}$ and $\sum_{j'\in J}\mW^{[l+1]}_{[1:m_{l+1}],j'}\neq \mzero$, then for $\beta_s\neq 0$, we have
    \begin{align*}
        \fT_{l,s, J}^{\alpha}\vtheta
        &=(\fT_{l,s}+\alpha\fV_{l,s, J})\vtheta \\
        &= (\fT_{l,s}+\alpha\sum_{j\in J}\fV_{l,s,j})\vtheta\\
        &=  (\fT_{l,s}+\alpha\sum_{j\in J}\frac{\beta_j}{\beta_s}\fV_{l,s})\vtheta\\
        &= \fT_{l,s}^{\alpha'}\vtheta,
    \end{align*} 
    where $\alpha'=\alpha\sum_{j\in J}\frac{\beta_j}{1-\beta_j}$, is simply the one-step critical embedding. We can extend this to the case of $\beta_s=0$.

    Then, for $ J'= J\cup\{m_l+1\}$, $l'=l$, $s'\in J$,
    \begin{align*}
        \fV_{l',s', J'}\fV_{l,s, J}
        &= (\fV_{l,s', J}+\fV_{l,s',m_l+1})\fV_{l,s, J}\\
        &= \fV_{l,s', J}\fV_{l,s, J}+\fV_{l,s',m_l+1}\fV_{l,s, J} \\
        &= \fV_{l,s',s}\fV_{l,s, J}+\fV_{l,s',m_l+1}\fV_{l,s, J} \\
        &= \mzero.
    \end{align*}
    In general, we have
    \begin{align*}
        \fV_{l',s', J'}\prod_{i=1}^{N}\fT_{l'_i,s'_i}\fV_{l,s, J}
        &= (\fV_{l,s', J}+\fV_{l,s',m_l+1})\prod_{i=1}^{N}\fT_{l'_i,s'_i}\fV_{l,s, J}\\
        &= \fV_{l,s', J}\prod_{i=1}^{N}\fT_{l'_i,s'_i}\fV_{l,s, J}+\fV_{l,s',m_l+1}\prod_{i=1}^{N}\fT_{l'_i,s'_i}\fV_{l,s, J} \\
        &= \fV_{l,s',s}\prod_{i=1}^{N}\fT_{l'_i,s'_i}\fV_{l,s, J}+\fV_{l,s',m_l+1}\prod_{i=1}^{N}\fT_{l'_i,s'_i}\fV_{l,s, J} \\
        &= \mzero. \\
    \end{align*}
    For $l'\neq l$ or $s'\notin J$, obviously we have $\fV_{l',s', J'}\fV_{l,s, J}=\mzero$ and $\fV_{l',s', J'}\prod_{i=1}^{N}\fT_{l'_i,s'_i}\fV_{l,s, J}=\mzero$.

    Now we are ready to prove the lemma. Let $ J_i=\{s_i\}\cup\{m_l+\#\{i|l_i=l,i\in[j]\}|l_j=l,s_j=s,j\in[K]\}$, where $\#$ indicates number of elements in a set. Then
    \begin{align*}
        \prod_{i=1}^{K} \fT_{l_i,s_i, J_i}^{\alpha_i} &= \prod_{i=1}^{K} (\fT_{l_i,s_i}+\alpha_i\fV_{l_i,s_i, J_i}) \\
        &= \prod_{i=1}^{K} \fT_{l_i,s_i}+\sum_{i=1}^{K}\alpha_i\fT_{l_K,s_K}\cdots \fV_{l_i,s_i, J_i} \cdots\fT_{l_1,s_1}, \\
        &= \prod_{i=1}^{K} \fT_{l_i,s_i}+\sum_{i=1}^{K}\alpha_i\fT_{l_K,s_K}\cdots \fV_{l_i,s_i} \cdots\fT_{l_1,s_1},
    \end{align*}
    which is a critical embedding for any $[\alpha_i]_{i=1}^{K}\in\sR^K$. This completes the proof.

\end{proof}

\begin{figure}
    \centering
    \includegraphics[width=0.5\textwidth]{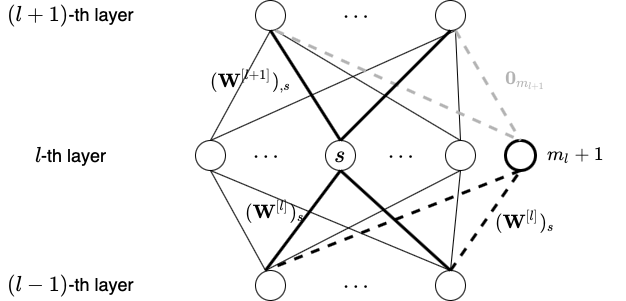}\includegraphics[width=0.5\textwidth]{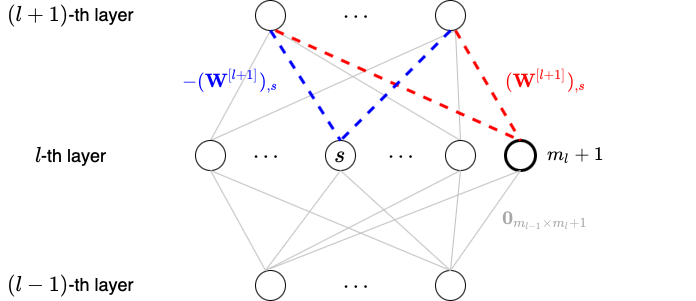}
    \vspace{12 pt}

    \includegraphics[width=0.5\textwidth]{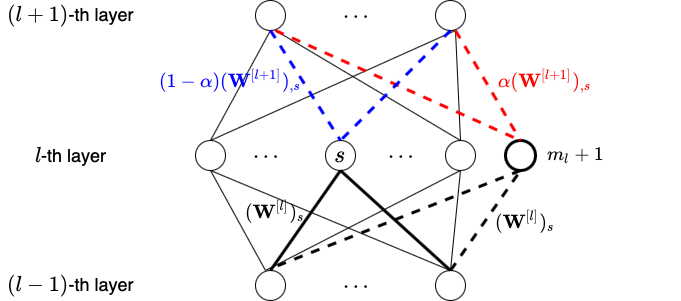}
    \caption{Illustration of $\fT_{l,s}$, $\fV_{l,s}$, and $\fT_{l,s}^{\alpha}$.}
    \label{fig:critical-embedding}
\end{figure}

\begin{figure}
    \centering
    \includegraphics[width=0.7\textwidth]{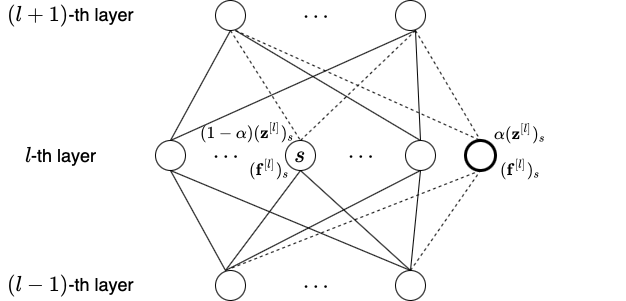}
    \caption{Illustration of $\vF$ and $\vZ$}
    \label{fig:F-Z}
\end{figure}

\subsection{Trivial critical transforms} \label{sec: trivial}
In general, neuron-index permutation among the same layer is a trivial criticality invariant transform because of the layer-wise intrinsic symmetry of DNN models. Therefore, any critical point/manifold may result in multiple "mirror" critical points/manifolds of the loss landscape through all possible permutations. However, this transform does not inform about the degeneracy of critical points/manifolds.

 For any $p$-homogeneous activation function $\sigma$ i.e., $\sigma(\beta x)=\beta^p\sigma(x)$ for any $\beta>0$ and $x\in\sR$, we define for any $l\in[L-1]$, $s\in[m_l]$ the following scaling transform 
 $\vtheta'=\fS_{l,s}^{\beta}(\vtheta)$ ($\beta\neq0$) such that $\mW'^{[l+1]}_{[1:m_{l+1}],s}=\frac{1}{\beta^p}\mW^{[l+1]}_{[1:m_{l+1}],s}$ and $\mW'^{[l]}_{s,[1:m_{l-1}]}=\beta\mW^{[l]}_{s,[1:m_{l-1}]}$,$\vb'^{[l]}_{s}=\beta\vb^{[l]}_{s}$, and all the other entries remain the same. Clearly, this transform is also a critical transform. Moreover, it informs about one more degenerate dimension for each neuron with $\Norm{\mW^{[l+1]}_{[1:m_{l+1}],s}}_2\Norm{\left(\mW^{[l]\T}_{s,[1:m_{l-1}]},\vb^{[l]}_{s}\right)^\T}_2\neq 0$. This critical scaling transform is trivial in a sense that it is an obvious result of the cross-layer scaling preserving intrinsic to each DNN of homogeneous activation function, not relevant to cross-width landscape similarity between DNNs we focus on.

\section{Details of experiments}\label{sec:Appdenix_B}

For the 1D fitting experiments (Figs. 1, 3(a), 4), we use tanh as the activation function, mean squared error (MSE) as the loss function. We use the full-batch gradient descent with learning rate 0.005 to train NNs for 300000 epochs. The initial distribution of all parameters follows a normal distribution with a mean of 0 and a variance of $\frac{1}{m^3}$.

For the iris classification experiment (Fig. 3(b)), we use sigmoid as the activation function, MSE as the loss function. We use the default Adam optimizer of full batch with learning rate 0.02 to train for 500000 epochs. The initial distribution of all parameters follows a normal distribution with mean $0$ and variance $\frac{1}{m^6}$.

For the experiment of MNIST classification (Fig. 5), we use ReLU as the activation function, MSE as the loss function. We also use the default Adam optimizer of full batch with learning rate 0.00003 to train for 100000 epochs. The initial distribution of all parameters follows a normal distribution with mean $0$ and variance $\frac{1}{m^6}$.

To obtain the empirical diagram in Fig. 4, we run 200 trials each for width-1, width-2 and width-3 tanh NNs with variance of initial parameters $\frac{1}{m^3}$ ($m=1,2,3$) for $300000$ epochs. Then we find all parameters with gradient less than $10^{-10}$, which we define as empirical critical points, throughout the training in total $600$ trajectories. Next, we cluster them based on their loss values, output functions, input parameters of neurons and only $4$ different cases arises after excluding the trivial case of constant zero output. Their output functions are shown in the figure.

Remark that, although Figs. 1 and 5 are case studies each based on a random trial, similar phenomenon can be easily observed as long as the initialization variance is properly small, i.e., far from the linear/kernel/NTK regime.

\end{document}